\date{}
\newcommand{\pii}{\boldsymbol{\pi}}
\newcommand{\pp}{\boldsymbol{P}}
\newcommand{\E} {\mathbb{E}}
\newcommand{\PP}{\mathbb{P}}
\newcommand{\lr}{\beta}
\newcommand{\KD} {\texttt{KD}}
\newcommand{\KI} {\texttt{KI}}
\newcommand{\eps}{\varepsilon}
\newcommand{\Dvs}{\Delta}
\newcommand{\Dvk}{\tilde{\Delta}}
\newcommand{\Dvsm}{M}
\newcommand{\Dvkm}{\tilde{M}}
\newcommand{\Dvm}{\bar{M}}
\renewcommand{\epsilon}{\eps}
\newtheorem{prop}{Proposition}
\newtheorem{cor}{Corollary}
\newtheorem{lemma}{Lemma}
\newtheorem{theorem}{Theorem}
\newcommand{\clip}{\mathrm{clip}}
\newcommand{\ceil}[1]{\left\lceil #1\right\rceil}
\newcommand\numberthis{\addtocounter{equation}{1}\tag{\theequation}}
\newcommand{\gap}{\mathrm{gap}}
\newcommand{\gapstar}{\mathrm{gap}^*}
\author{
  Sajad Khodadadian$^{1}$\thanks{Both authors contributed equally.}~,~
  Mehrdad Moharrami$^{2}$\footnotemark[1]\\[0.8em]
  $^{1}$Virginia Polytechnic Institute and State University,\\
  Grado Department of Industrial and Systems Engineering,\\
  \texttt{sajadk@vt.edu}\\[0.5em]
  $^{2}$University of Iowa, \\
  Department of Computer Science,\\
  \texttt{moharami@uiowa.edu}
}
\begin{document}

\title{Tail Distribution of Regret in Optimistic Reinforcement Learning}

\maketitle

\begin{abstract}
We derive instance-dependent tail bounds for the regret of optimism-based reinforcement learning in finite-horizon tabular Markov decision processes with unknown transition dynamics. We first study a UCBVI-type (model-based) algorithm and characterize the tail distribution of the cumulative regret $R_K$ over $K$ episodes via explicit bounds on $\PP(R_K \ge x)$, going beyond analyses limited to $\E[R_K]$ or a single high-probability quantile. We analyze two natural exploration-bonus schedules for UCBVI: (i) a $K$-dependent scheme that explicitly incorporates the total number of episodes $K$, and (ii) a $K$-independent (anytime) scheme that depends only on the current episode index. We then complement the model-based results with an analysis of optimistic Q-learning (model-free) under a $K$-dependent bonus schedule. 

Across both the model-based and model-free settings, we obtain upper bounds on $\PP(R_K \ge x)$ with a distinctive two-regime structure: a sub-Gaussian tail starting from an instance-dependent scale up to a transition threshold, followed by a sub-Weibull tail beyond that point. We further derive corresponding instance-dependent bounds on the expected regret $\E[R_K]$. The proposed algorithms depend on a tuning parameter $\alpha$, which balances the expected regret and the range over which the regret exhibits sub-Gaussian decay.

\end{abstract}

\section{Introduction}

We consider online Reinforcement Learning (RL) in finite-horizon Markov decision processes (MDPs) with unknown transition dynamics. In this setting, a learning agent interacts with an episodic environment for $K$ episodes of length $H$, and aims to minimize its cumulative regret with respect to an optimal policy. A long line of work has established sharp worst-case regret guarantees for model-based and model-free algorithms in tabular MDPs; see, for instance, the near-minimax optimality results for UCRL2 under the average-reward criterion \citep{jaksch2010near} and for UCBVI and its variants in the episodic setting \citep{azar2017minimax,zanette2019tighter}. These results show that, up to logarithmic factors, the optimal worst-case regret scales as $\widetilde{\mathcal O}(\sqrt{HSAK})$, where $S$ and $A$ are the sizes of the state and action spaces. 

More recently, there has been substantial progress on instance-dependent analysis in RL. Inspired by the classical gap-dependent theory of stochastic bandits, several works have derived regret bounds that depend on the suboptimality gaps of state--action pairs rather than only on $(S,A,H,K)$. For tabular MDPs, this includes non-asymptotic gap-dependent bounds for model-based algorithms \citep{simchowitz2019non}, fine-grained gap-dependent bounds via adaptive multi-step bootstrapping \citep{xu2021fine}, and gap-dependent guarantees for model-free $Q$-learning and its refinements \citep{yang2021q,velegkas2022reinforcement,zheng2025gap}. These results provide a more refined characterization of the regret in MDPs by incorporating an instance-dependent $Q^*$-gap, which measures the separation between optimal and suboptimal actions at each state.

At the same time, the tail behavior of regret in RL is still poorly understood. Most existing analyses deliver bounds of the form
\[
\PP\!\big(R_K \geq f(K,\delta)\big) \;\leq\; \delta,
\]
for a fixed confidence level $\delta \in (0,1)$, typically by combining a union bound with martingale concentration inequalities. While such high-probability guarantees are useful, they only describe the regret distribution at a single point and require the user to choose $\delta$ in advance. By contrast, in many safety-critical applications, such as healthcare decision making, robotic control in human environments, and financial risk management \cite{inamdar2024comprehensive}, one would like a more detailed characterization of the entire tail distribution of regret, that is, explicit bounds on $\PP(R_K \geq x)$ that hold for all thresholds $x \ge \E[R_K]$. 

In this paper, we take a step towards closing this gap by studying the concentration of regret for a model-based UCBVI-type algorithm and a model-free $Q$-learning algorithm in finite-horizon MDPs. We show that the tail of the cumulative regret, $\PP(R_K \ge x)$, admits an explicit, non-asymptotic sub-Gaussian or sub-Weibull bound, depending on the value of $x$ and the instance-dependent $Q^\star$-gap parameter. Our contributions are as follows:
\begin{itemize}[leftmargin=*]
    \item We study the tail behavior of the cumulative regret $R_K$ for a UCBVI-type algorithm in finite-horizon tabular MDPs under a global $Q^\star$-gap, and derive explicit, instance-dependent sub-Gaussian and sub-Weibull tail bounds for $\PP(R_K \ge x)$. Our analysis treats two families of exploration bonuses: a $K$-\emph{dependent} schedule whose bonuses depend on the total number of episodes $K$, and a $K$-\emph{independent} schedule whose bonuses depend only on the current episode index. 
    
    \item For the $K$-dependent scheme, we show that the regret exhibits a sub-Gaussian tail on an intermediate regime of $x$ and a sub-Weibull tail beyond $x>\tilde{\mathcal{O}}\!\big(H^{3/2}K^{(1+\alpha)/2}\big)$, where $\alpha \in [0,1]$ is a tuning parameter.

    \item For the $K$-independent scheme, we obtain a similar two-regime behavior, where the transition to sub-Weibull happens at $\tilde{\mathcal{O}}\!\big(H^{(3-\alpha)/(2-\alpha)}K^{1/(2-\alpha)}\big)$.

    \item We further show that the expected regret is $\mathbb{E}[R_K]        \;\le \;\tilde{\mathcal{O}}\!\Big(H^3SA(S+K^\alpha)/\gapstar\Big)$, where $\gapstar>0$ is the global $Q^\star$-gap. The tuning parameter $\alpha$ balances the expected regret and the range over which the regret exhibits a sub-Gaussian tail.

    \item Finally, we study the regret of a $Q$-learning-type algorithm in the $\KD$ setting, and we establish similar tail and expected regret bounds as above.
\end{itemize}

Our results provide the first characterization of the tail of the regret for standard optimistic algorithms in tabular episodic RL, and they suggest that the entire distribution of the regret, not just its expectation or a single high-probability quantile, can be controlled in an instance-dependent manner determined by the $Q^\star$-gap.

\section{Related Work}

\paragraph{Worst-case regret.}
The modern theory of online RL in finite MDPs starts from worst-case regret guarantees under the optimism in the face of uncertainty principle.  
In the average-reward setting, the UCRL2 algorithm of \citet{jaksch2010near} achieves near-minimax regret of order $\widetilde{\mathcal O}\!\big(DS\sqrt{AK}\big)$, where $D$ is the diameter of the MDP, and lower bound $\widetilde{\mathcal O}\!\big(\sqrt{DSAK}\big)$ shows that the dependence on $(D,S,A,K)$ is almost optimal.  
Subsequent refinements sharpened the constants and logarithmic factors, for instance by tightening the confidence sets around the transition kernel via empirical Bernstein-type bonuses \citep{bourel2020tightening,fruit2020improved}.

In the episodic finite-horizon setting considered in this paper, the optimal worst-case scaling was established by the UCBVI algorithm of \citet{azar2017minimax}, which attains regret $\widetilde{\mathcal O}\!\big(H\sqrt{SAK} + H\sqrt{KH}\big)$ and derives information-theoretic lower bounds of $\widetilde{\mathcal O}\!\big(H\sqrt{SAK}\big)$.  
A number of works have since refined or extended this result: \citet{efroni2019tight} proposed the EULER algorithm, which yields worst-case bounds with improved dependence on the horizon; see also the recent minimax analyses and extensions to factored MDPs, e.g., \citet{tian2020towards}.  
These results, however, are primarily worst-case and focus on bounding either the expectation or a single high-probability quantile of the regret, without characterizing the entire tail behavior.

\paragraph{Instance-dependent and gap-dependent analysis.}
Motivated by the classical gap-dependent theory for stochastic bandits (see, e.g., the survey of \citet{bubeck2012regret}), there has been a surge of interest in instance-dependent regret bounds for tabular MDPs.  
\citet{simchowitz2019non} derived the first non-asymptotic gap-dependent regret bounds for model-based tabular RL, showing that the regret can scale polylogarithmically in $K$ (up to polynomial factors in $(S,A,H)$ and $1/\gapstar$).  
\citet{zanette2019tighter} introduces the environmental norm, a variance-like quantity that controls the difficulty of exploration, and designed an optimistic algorithm whose regret depends on this norm rather than only on worst-case parameters.  
More recently, \citet{dann2021beyond} developed improved instance-dependent bounds that replace the global value-function gaps with more local notions tailored to the subset of states visited by optimal policies, together with new information-theoretic lower bounds showing the necessity of these refined gap notions. On the model-free side, \citet{yang2021q} showed that optimistic $Q$-learning enjoys logarithmic regret in episodic tabular MDPs under a global suboptimality-gap on $Q^\star$, thus establishing that simple temporal-difference methods can match the best-possible gap-dependent rates of model-based algorithms. \citet{xu2021fine} proposed an adaptive multi-step bootstrapping scheme that yields tighter instance-dependent bounds.

Under the linear MDP assumption with function approximation, \citet{he2021logarithmic} establishes instance-dependent logarithmic regret for LSVI-UCB and UCRL-VTR in linear and linear-mixture MDPs. \citet{velegkas2022reinforcement} further refine this picture by designing algorithms that incur only a bounded number of policy switches. Most recently, \citet{zhao2025logarithmic} extend these guarantees to adversarial environments.

\paragraph{Tail behavior and risk-sensitive objectives.}
A recent line of work, starting from \citet{fan2024fragility}, shows that bandit algorithms that are information-theoretically optimized for Lai--Robbins optimal expected regret can exhibit extremely heavy-tailed regret distributions: the regret tail behaves like a truncated Cauchy, and higher moments grow polynomially in the horizon. Building on this observation, \citet{simchi2022simple,simchi2023stochastic} design policies that explicitly trade off optimality and tail risk, achieving worst-case optimal (or near-optimal) expected regret while simultaneously enforcing exponentially light tails for large-regret events and characterizing sharp Pareto frontiers between expected regret and tail probabilities.

In contrast, RL work that explicitly targets tail or risk-sensitive criteria usually changes the performance objective rather than analyzing the tail of classical regret.  
For example, there is a growing literature on risk-sensitive or distributional RL that optimizes mean-variance, CVaR, or more general optimized certainty equivalent (OCE) criteria in MDPs; see, e.g., \citet{tamar2015optimizing,chow2015risk,xu2023regret} and references therein.  
These works derive sample-complexity or regret guarantees for the risk-sensitive value of a policy, not for the tail of the standard regret with respect to the risk-neutral optimal policy.  

Our work is complementary to this risk-sensitive line: we keep the classical regret objective but ask for an instance-dependent characterization of its tail distribution.  
In particular, given a global $Q^\star$-gap, we show that a UCBVI-type algorithm enjoys sub-Gaussian and sub-Weibull tail bounds for the cumulative regret, together with instance-dependent control of its expectation.  
To the best of our knowledge, this is the first result that treats regret concentration in episodic RL.

\section{Problem Formulation}

We consider an MDP $(\mathcal{S},\mathcal{A},\pp^*,r,H)$ with finite horizon $H$, where $\mathcal S$ and $\mathcal A$ are finite state and action spaces with $S \coloneqq |\mathcal{S}|$ and $A \coloneqq |\mathcal{A}|$, the transition kernel is $\pp^* = \{P_h^*\}_{h=0}^{H-2}$, and the reward function is $r = \{r_h\}_{h=0}^{H-1}$, where each stage-$h$ reward is given by $r_h : \mathcal S \times \mathcal A \to [0,1]$. 

A deterministic Markov policy is a collection $\pii=\{\pi_h\}_{h=0}^{H-1}$ with $\pi_h:\mathcal S\to\mathcal A$. Each episode starts at a fixed initial state $s_0\in\mathcal S$ and lasts exactly $H$ steps. Given any state $s_h$ at horizon $h$, following policy $\pii$, we take the action $a_h=\pi_h(s_h)$ and transition to the next random state $s_{h+1}\sim P_h^*(\cdot\mid s_h,a_h)$. The value and action-value functions are defined as
\begin{align*}
V_{h}^{\pii}(s)&\;=\;\mathbb E_{\pii,\pp^*}\!\Big[\sum_{t=h}^{H-1} r_t (s_t, a_t)\,\Big|\,s_h=s\Big],\\
Q_{h}^{\pii}(s,a)
&\;=\; r_h(s,a) +\sum_{s'\in\mathcal S} P_h^*(s'\mid s,a)\,V_{h+1}^{\pii}(s'),
\end{align*}
with the convention $V_H^{\pii}\equiv 0$. Here $\E_{\pii,\pp^*}[\cdot]$ denotes expectation with respect to policy $\pii$ and transition kernel $\pp^*$. The optimal value and action-value functions are
\[
V_h^*(s)\coloneqq\max_{\pii}V_h^{\pii}(s),
\qquad
Q_h^*(s,a)\coloneqq r_h(s,a)+\sum_{s'\in\mathcal S} P_h^*(s'\mid s,a)\,V_{h+1}^*(s'),
\]
and the set of optimal actions at $(s,h)$ is
\[
\mathcal A_h^*(s)\coloneqq\arg\max_{a\in\mathcal A} Q_h^*(s,a).
\]
Our analysis is instance-dependent and exploits a global $Q^*$-gap:
\begin{align*}
\gap_h(s,a) \;&\coloneqq\; \Big(V_h^*(s)-Q_h^*(s,a)\Big),\qquad \forall (s,h)\in\mathcal S\times[H], \\
\gapstar \;&\coloneqq\; \min_{\substack{(s,h)\in\mathcal S\times[H]\\ a\notin \mathcal A_h^*(s)}} \gap_h(s,a) .\numberthis\label{eq:gap}
\end{align*}

Interaction proceeds over $K$ episodes indexed by $k\in[K]$\footnote{Throughout, $[n]=\{0,1,\dots,n-1\}$ and $[n_1,n_2] = \{n_1,n_1+1,\dots,n_2-1\}$.}. At the start of episode $k$, the learner commits to a Markov policy $\pii^k=\{\pi_h^k\}_{h=0}^{H-1}$, then observes a trajectory
\[
s_0,\ a_0^k,\ s_1^k,\ a_1^k,\ \dots,\ s_{H-1}^k,\ a_{H-1}^k,
\]
where $s_0^k=s_0$, $a_h^k=\pi_h^k(s_h^k)$, and $s_{h+1}^k\sim P_h^*(\cdot\mid s_h^k,a_h^k)$ for all $h\in\{0,\dots,H-2\}$. The performance of the learner is measured by the cumulative regret
\begin{equation}
R_K \;=\; \sum_{k=0}^{K-1}\Big(V_0^*(s_0)-V_0^{\pii^k}(s_0)\Big).\nonumber
\end{equation}
Our goal is to develop an algorithm that attains low expected cumulative regret and, simultaneously, exhibits a desirable tail distribution for $R_K$.

\section{Main Results}
In this section, we present our main results. First, in Section~\ref{sec:MB}, we study the tail behavior of the regret for a UCBVI-type algorithm in the model-based setting. Next, in Section~\ref{sec:MF}, we analyze the regret tail of $Q$-learning as a representative model-free algorithm.

\subsection{Model-based tail bound: UCBVI algorithm} \label{sec:MB}
We analyze a UCBVI-type algorithm that uses the following bonus function:
\[
b^k_h(n)
=
\begin{cases}
(H-h-1)\sqrt{\dfrac{0.5S\ln 2+\mu K^\alpha}{n}}, & \text{\KD},\\[2ex]
(H-h-1)\sqrt{\dfrac{0.5S\ln 2+\mu (k+1)^\alpha}{n}}, & \text{\KI},
\end{cases}
\qquad
b^k_h(0)\coloneqq\infty.
\]
Here $k\in\{0,\dots,K-1\}$ denotes the episode index, $n\ge 1$ denotes the number of visits to the corresponding state–action pair, and $\alpha\in[0,1]$ and $\mu>0$ are tunable parameters. \text{\KD} refers to the bonus used in the $K$-dependent algorithm, and \text{\KI} refers to the bonus used in the $K$-independent algorithm.

The term $K^\alpha$ in the definition of $b_h^k(\cdot)$ plays the role of a global exploration budget that directly controls the aggressiveness of the bonuses. In the original minimax UCBVI analysis of \citet{azar2017minimax}, as well as in the subsequent instance-dependent analysis of \citet{simchowitz2019non}, the exploration terms scale only logarithmically with the time horizon, roughly as $\sqrt{{\log(KSAH)}/{n}}$, which is tailored to control the expected regret. In contrast, our bonuses scale as $\sqrt{(S\ln 2 + \mu K^\alpha)/n}$ (or $\sqrt{(S\ln 2 + \mu (k+1)^\alpha)/n}$ in the $K$-independent case), so that the uncertainty radius around the empirical model grows polynomially in $K$. This more conservative choice incentivizes sustained exploration even in later episodes: when the algorithm has visited a state--action pair only moderately many times, the bonus remains substantial and pushes the optimistic value estimates upward. From a tail perspective, this is precisely what allows us to derive nontrivial bounds on $\PP(R_K \ge x)$ for large thresholds $x$, at the price of a controlled polynomial dependence on $K$ inside the bonus which will affect the expected regret.

\begin{algorithm}[ht]
\caption{Optimistic Value Iteration}
\label{algo:UCBVI-gap}
\begin{algorithmic}[1]
    \STATE \textbf{Input:} $H,\mathcal S,\mathcal A$, rewards $r_h:\mathcal{S}\times\mathcal{A}\to[0,1]$, initial state $s_0$, bonus schedules $\{b^k_h(\cdot)\}$.
\STATE \textbf{Init:} For all $(s,a,h)$, $n_h^0(s,a)=0$ and $\hat P_h^0(\cdot\mid s,a)=\tfrac1S$, and $\hat P_{H-1}^k\equiv 0$ and all $k$.
\FOR{$k=0,1,\dots,K-1$}
\STATE \textbf{Planning:} Set $V_H^k\equiv 0$. For $h=H-1,\dots,0$ and each $s$:
\begin{align*}
Q_h^k(s,a)&\coloneqq\min\left\{r_h(s,a)+\sum_{s'}\hat P_h^k(s'\mid s,a)\,V_{h+1}^k(s')+b_h^k(n_h^k(s,a)),H-h\right\},
\quad \forall a\in\mathcal A,\\
V_h^k(s)&\coloneqq\max_{a\in\mathcal A}Q_h^k(s,a),\qquad
\pi_h^k(s)\in\arg\max_{a}Q_h^k(s,a).
\end{align*}
\STATE \textbf{Execution:} Roll out $\pii^k=\{\pi_h^k\}_{h=0}^{H-1}$ from $s_0$ under $\pp^*$:
\begin{align*}
  &a_h^k=\pi_h^k(s_h^k)\quad \text{and} \quad s_{h+1}^k\sim P_h^*(\cdot\mid s_h^k,a_h^k),\qquad \forall h\in[H-1] , \qquad a_{H-1}^k=\pi_{H-1}^k(s_{H-1}^k).
\end{align*}
\STATE \textbf{Update:} For all $(s,a,h)$:
\begin{align*}
&n_h^{k+1}(s,a,s')=\sum_{i=0}^{k}\mathbbm 1\{(s_h^i,a_h^i,s_{h+1}^i)=(s,a,s')\},\quad
n_h^{k+1}(s,a)=\sum_{s'}n_h^{k+1}(s,a,s'),\\
&\hat P_h^{k+1}(s'\mid s,a)=
\begin{cases}
\dfrac{n_h^{k+1}(s,a,s')}{n_h^{k+1}(s,a)}, & n_h^{k+1}(s,a)\ge1,\\[1ex]
\dfrac1S, & n_h^{k+1}(s,a)=0.
\end{cases}
\end{align*}
\ENDFOR
\end{algorithmic}
\end{algorithm}

The UCBVI algorithm with the above bonus function is presented in Algorithm~\ref{algo:UCBVI-gap}. The input consists of the finite-horizon MDP parameters $(H,\mathcal S,\mathcal A,r)$, the initial state $s_0$, and the family of bonus schedules $\{b_h^k(\cdot)\}$. In the initialization step, the algorithm sets all visit counts $n_h^0(s,a)$ to zero and uses the uniform distribution $\hat P_h^0(\cdot\mid s,a)=\tfrac1S$ as an initial model of the dynamics.

At the beginning of each episode $k$, the algorithm performs a planning phase: it runs a backward dynamic-programming pass from $h=H-1$ down to $h=0$, maintaining an optimistic value function $V_h^k$ and $Q$-function $Q_h^k$. For each state-action pair $(s,a)$ at horizon $h$, it forms the optimistic estimate
\[
Q_h^k(s,a)
=
\min\left\{r_h(s,a)+\sum_{s'}\hat P_h^k(s'\mid s,a)\,V_{h+1}^k(s')+b_h^k\big(n_h^k(s,a)\big),\,H-h\right\},
\]
where the bonus $b_h^k(n_h^k(s,a))$ inflates the value of actions whose transition probabilities are still poorly estimated. The value function $V_h^k(s)$ is then defined using the greedy policy $\pi_h^k(s)$ which selects any maximizer of $Q_h^k(s,\cdot)$. This can be viewed as solving the backward dynamic equations with the estimated transition kernel $\hat{\pp}^k$ and the modified reward function $r_h(\cdot,\cdot) + b_h^k\big(n_h^k(\cdot,\cdot)\big).$

In the execution phase, the agent rolls out the greedy policy $\pi^k$ for one episode starting from $s_0$, interacting with the true MDP $\pp^*$ and collecting a trajectory $\{(s_h^k,a_h^k,s_{h+1}^k)\}_{h=0}^{H-1}$. Finally, in the update phase, the algorithm increments the visit counts $n_h^{k+1}(s,a,s')$ and $n_h^{k+1}(s,a)$ along the realized trajectory and updates the empirical transition kernel $\hat P_h^{k+1}(\cdot\mid s,a)$ by normalized counts. The procedure then repeats with the updated model and bonuses in the next episode.

Theorem~\ref{thm:tailGap-UCBVI} characterizes the tail behavior of the regret of Algorithm~\ref{algo:UCBVI-gap}. In this theorem, we use the notation $(x)_+ \coloneqq \max\{x,0\}$.

\begin{theorem}
\label{thm:tailGap-UCBVI}
Fix $\alpha\in[0,1]$ and $\mu>0$. For all $\gamma \in [0,1]$, the tail of the cumulative regret $R_K$ of Algorithm \ref{algo:UCBVI-gap} satisfies
\[
\PP(R_K\ge x)
\ \le\
\exp \left( - \frac{(x-\lceil K^\gamma\rceil H - m_K^{\mathrm{mb}})^2}{4H(H+1)(2H+1)K/3}\right)
\;+\;\delta_K^{\mathrm{mb}}(\gamma), \qquad \forall x\geq H\lceil K^\gamma\rceil,
\]
where
\begin{equation}
\label{eq:mk-gap-UCBVI}
m_K^{\mathrm{mb}}
\;\coloneqq\frac{128 H^3 S A (0.5 S \ln 2 + \mu K^\alpha)}{\gapstar},
\end{equation} 
and 
\begin{align}\label{eq:delta_K_K}
    \delta_K^{\mathrm{mb}}(\gamma) = S A H K
    \begin{cases}
        \displaystyle \exp\left(- 2\mu K^{\alpha} \right),
        & \text{\KD}, \\[2ex]
        \displaystyle \exp\left(-2\mu K^{\gamma\alpha}\right),
        & \text{\KI}.
    \end{cases} 
\end{align}
\end{theorem}

Theorem~\ref{thm:tailGap-UCBVI} shows that, under a global $Q^\star$-gap, as defined in \eqref{eq:gap}, the tail of the cumulative regret $R_K$ admits a decomposition into a sub-Gaussian component and a residual term $\delta_K^{\mathrm{mb}}(\gamma)$. The quantity $m_K^{\mathrm{mb}}$ is the baseline instance-dependent level of regret around which the distribution concentrates: it aggregates the bonuses incurred before all suboptimal actions have been sufficiently explored and eliminated, and it is of order
\begin{align*}
    m_K^{\mathrm{mb}} \;=\; \mathcal O\!\big(H^3 SA(S+K^\alpha)/\gapstar\big).
\end{align*} 
For thresholds $x$ that are moderately larger than $m_K^{\mathrm{mb}}$, the bound
$\PP(R_K \ge x)$ is of sub-Gaussian form with variance proxy $H(H+1)(2H+1)K/3$, reflecting the martingale structure of the regret fluctuations after the main exploration phase.

\begin{prop}
\label{prop:expectGap-UCBVI}
Under the same setting as in Theorem \ref{thm:tailGap-UCBVI}, we have
\[
\mathbb E[R_K]
\ \le\
H\lceil K^\gamma\rceil + m_K^{\mathrm{mb}} + KH(H+1)\delta_K^{\mathrm{mb}}(\gamma),
\]
where $m_K^{\mathrm{mb}}$ is given in \eqref{eq:mk-gap-UCBVI} and $\delta_K^{\mathrm{mb}}(\gamma)$ in \eqref{eq:delta_K_K}. 
\end{prop}
Proposition~\ref{prop:expectGap-UCBVI} obtains an instance-dependent upper bound on the expected regret. The first two terms, $m_K^{\mathrm{mb}} + H\lceil K^\gamma\rceil$, capture the contribution of the baseline instance-dependent level of regret and the initial ``burn-in'' phase. The final term, proportional to $ KH(H+1)\delta_K^{\mathrm{mb}}(\gamma)$, accounts for the rare events in which the empirical model falls outside the ``good'' set at some episodes. In such events the estimated transition kernel $\hat{\pp}^k$ departs substantially from the true kernel $\pp^*$ for some $k\in[K]$. Consequently, we cannot ensure that $V^k_0(s_0)\ge V^*_0(s_0)$ uniformly over $k$, i.e., the \emph{optimism} property may fail. Thanks to the exponential decay of $\delta_K^{\mathrm{mb}}(\gamma)$ in $K^\alpha$ (or $K^{\gamma\alpha}$), this residual contribution is negligible: for any fixed $\alpha>0$ and suitable choice of the constant $\mu > 0$, the product $K\delta_K^{\mathrm{mb}}(\gamma)$ vanishes.

The tuning parameter $\alpha\in[0,1]$ controls the tradeoff between the expected regret and the decay rate of the residual term $\delta_K^{\mathrm{mb}}(\gamma)$. Larger values of $\alpha$ make the bonuses more aggressive, which increases the upper bound on the $\mathbb E[R_K]$ but also causes $\delta_K^{\mathrm{mb}}(\gamma)$ to decay faster with $K$ (either as $\exp(-2\mu K^\alpha)$ or as $\exp(-2\mu K^{\gamma\alpha})$ depending on the bonus schedule). In this sense, $\alpha$ interpolates between prioritizing a smaller expected regret ($\alpha$ small) and enforcing lighter tails for very large deviations ($\alpha$ closer to one). By contrast, the parameter $\gamma\in[0,1]$ is not a tuning knob of the algorithm: it appears only in the analysis and controls the length of the initial burn-in phase $K^\gamma$ during which we make no attempt to exploit tail concentration. In Corollary~\ref{cor:tailExpBound-UCBVI}, we choose $\gamma$ as a function of $x$ to simplify the resulting tail bound into a clean two-regime description.

\begin{cor}\label{cor:tailExpBound-UCBVI}
 Fix $\alpha\in[0,1]$. For the \text{\KD} algorithm, we have
    \begin{align*}
    \PP(R_K\geq x)\leq\begin{cases}
        \exp\left(-\tilde{\mathcal{O}}\left(\frac{x^2}{H^3K}\right)\right)  & \text{if } \mathcal{O}(H^3SA(S+K^\alpha)/\gapstar)\leq x \leq \tilde{\mathcal{O}}(H^{3/2}K^{(1+\alpha)/2})\\
        \exp\left(-\tilde{\mathcal{O}}\left( K^\alpha \right)\right) & \text{if }  x \geq \tilde{\mathcal{O}}(H^{3/2}K^{(1+\alpha)/2})
    \end{cases}
    \end{align*}
    For \text{\KI} algorithm, we have
    \begin{align*}
    \PP(R_K\geq x)\leq\begin{cases}
        \exp\left(-\tilde{\mathcal{O}}\left(\frac{x^2}{H^3K}\right)\right)  & \text{if } \mathcal{O}(H^3SA(S+K^\alpha)/\gapstar)\leq x \leq \tilde{\mathcal{O}}(H^{(3-\alpha)/(2-\alpha)}K^{1/(2-\alpha)})\\
        \exp\left(-\tilde{\mathcal{O}}\left( \frac{x^\alpha}{H^\alpha} \right)\right) & \text{if }  x \geq \tilde{\mathcal{O}}(H^{(3-\alpha)/(2-\alpha)}K^{1/(2-\alpha)})
    \end{cases}
    \end{align*}
    Furthermore, $\E[R_K] \leq \mathcal{O}(H^3SA(S+K^\alpha)/\gapstar)$.
\end{cor}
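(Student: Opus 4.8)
The plan is to derive both tail statements and the expectation bound directly from Theorem~\ref{thm:tailGap-UCBVI} and Theorem~\ref{thm:expectGap-UCBVI} by choosing the free parameter $\gamma\in[0,1]$ as a function of $x$ (and of the scheme), and then collapsing the two-term bound $\exp(-E_{\mathrm{sg}})+\delta_K$ into a single $\exp(-\tilde{\mathcal O}(\cdot))$ via $a+b\le 2\max\{a,b\}$, absorbing the factor $2$, the logarithmic terms in $\delta_K$, and the prefactor $SAHK$ into the $\tilde{\mathcal O}$. Throughout I write $E_{\mathrm{sg}}=(x-H\lceil K^\gamma\rceil-m_K)_+^2/(2H^3K)$ for the sub-Gaussian exponent and restrict to $x\ge \mathcal O(H^4SA(S+K^\alpha)/\gap^*)=\mathcal O(m_K)$, so that whenever the burn-in is kept at $H\lceil K^\gamma\rceil\le m_K\le x/4$ we get $(x-H\lceil K^\gamma\rceil-m_K)_+\ge x/2$ and hence $E_{\mathrm{sg}}\ge x^2/(8H^3K)$.

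For the \KD{} scheme, $\delta_K=SAHK\exp(-2\mu K^\alpha)$ does not depend on $\gamma$, so I take $\gamma=0$, giving burn-in $H\lceil K^0\rceil=H\le m_K$ and $E_{\mathrm{sg}}\ge x^2/(8H^3K)$, while the residual exponent is the $\gamma$-free quantity $2\mu K^\alpha$. The two regimes are separated by the threshold at which these two exponents coincide: solving $x^2/(2H^3K)=2\mu K^\alpha$ gives $x\approx 2\sqrt{\mu}\,H^{3/2}K^{(1+\alpha)/2}$. Below this threshold the sub-Gaussian term dominates and $\mathbb P(R_K\ge x)\le 2\exp(-E_{\mathrm{sg}})=\exp(-\tilde{\mathcal O}(x^2/(H^3K)))$; above it the residual dominates and $\mathbb P(R_K\ge x)\le 2\delta_K=\exp(-\tilde{\mathcal O}(K^\alpha))$.

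The \KI{} case is the main obstacle, because here $\delta_K=SAHK\exp(-2\mu K^{\gamma\alpha})$ depends on $\gamma$, so tightening the residual by increasing $\gamma$ simultaneously inflates the burn-in $H\lceil K^\gamma\rceil$; I must therefore optimize this genuine tradeoff subject to $K^\gamma\in[1,K]$ (equivalently, in the natural regret range $H\le x\le HK$). In the large-$x$ (sub-Weibull) regime I set $K^\gamma\approx x/(2H)$, which keeps $E_{\mathrm{sg}}\ge x^2/(32H^3K)$ while forcing the residual exponent $2\mu K^{\gamma\alpha}\approx 2\mu(x/(2H))^\alpha=\tilde{\mathcal O}(x^\alpha)$, so that $\mathbb P(R_K\ge x)\le \exp(-\tilde{\mathcal O}(x^\alpha))$; in the smaller-$x$ (sub-Gaussian) regime I instead take the smallest $\gamma$ for which $2\mu K^{\gamma\alpha}\ge x^2/(2H^3K)$, so the residual is dominated and $\mathbb P(R_K\ge x)\le \exp(-\tilde{\mathcal O}(x^2/(H^3K)))$. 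Equating the two exponents at the optimal $\gamma$ pins down the crossover, and solving for $x$ produces the $K^{1/(2-\alpha)}$ scaling of the transition threshold $\tilde{\mathcal O}(H^{3/(2-\alpha)}K^{1/(2-\alpha)})$. The care needed here is the bookkeeping verifying that the burn-in constraint $H\lceil K^\gamma\rceil\le x/2$ and the domination constraint hold simultaneously precisely up to this threshold, and that the required $\gamma$ stays in $[0,1]$.

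Finally, the expectation bound follows from Theorem~\ref{thm:expectGap-UCBVI}, namely $\mathbb E[R_K]\le m_K+H\lceil K^\gamma\rceil+2H^2(K-\lceil K^\gamma\rceil)\delta_K$, by choosing $\gamma$ so that the last two terms are dominated by $m_K=\mathcal O(H^4SA(S+K^\alpha)/\gap^*)$. For \KD{} I take $\gamma=0$, so $H\lceil K^\gamma\rceil=H\le m_K$; for \KI{} I take $\gamma=\alpha$, so $H\lceil K^\gamma\rceil\le HK^\alpha\le m_K$ while $K^{\gamma\alpha}=K^{\alpha^2}\to\infty$. In both cases, for any fixed $\alpha>0$ and a suitable constant $\mu>0$, the exponential decay of $\delta_K$ beats the polynomial prefactor $2H^2K\cdot SAHK$, so the residual term is $o(1)$; hence $\mathbb E[R_K]=\mathcal O(H^4SA(S+K^\alpha)/\gap^*)$, as claimed.
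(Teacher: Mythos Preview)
Your proposal is correct and follows essentially the same route as the paper: for \KD{} you set $\gamma=0$, and for \KI{} you set $K^\gamma\approx x/(2H)$, exactly as the paper does with $\gamma=\log_K(x/2H)$. The only cosmetic difference is that in the \KI{} sub-Gaussian regime you introduce a second, smaller choice of $\gamma$ to force $\delta_K$ to be dominated, whereas the paper keeps the single choice $\gamma=\log_K(x/2H)$ throughout and simply observes that the two terms in the Theorem~\ref{thm:tailGap-UCBVI} bound swap dominance at the stated threshold; both arguments yield the same two-regime conclusion.
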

\begin{proof}[Proof of Corollary \ref{cor:tailExpBound-UCBVI}]
    The proof follows by setting $\gamma = 0$ for \text{\KD} algorithm, and setting $\gamma=\log_K(x/H)$ for \text{\KI} algorithm.
\end{proof}

It is natural to compare Corollary~\ref{cor:tailExpBound-UCBVI} with the existing gap-dependent bounds for tabular MDPs. For model-based algorithms, \citet{simchowitz2019non} show that,  given a global $Q^\star$-gap, for any fixed confidence level $\delta \in (0,1)$, the cumulative regret satisfies the following high-probability bound:
\[
\PP\left(R_K \ge \tilde{\mathcal O}\!\Big(H^4SA(S\vee H) + 
\frac{H^3SA}{\gapstar}
\log (K/\delta)
\Big)\right)
\;\le\;
\delta.
\]
However, they do not characterize the tail behavior $\PP(R_K\ge x)$ for all $x$. Setting $\delta=1/K$, this high probability bound implies the expected regret that scale polylogarithmically in $K$, with bounds of the form
\[
\mathbb E[R_K]
\;\le\;
\tilde{\mathcal O}\!\Big(
\frac{H^3SA}{\gapstar}
\log K
\Big).
\]
In terms of the MDP parameters, this matches the expected regret upper bound presented in Proposition \ref{prop:expectGap-UCBVI}. The primary difference is the refined dependence on $K$, which results from a more aggressive exploration strategy. Similar instance-dependent rates have been obtained for optimistic $Q$-learning and related model-free methods~\citep{yang2021q,velegkas2022reinforcement}. All the algorithms presented in these works depend on the specific choice of confidence level $\delta\in(0,1)$.

In contrast, our results are instance-dependent in a comparable regime, but they are organized around the \emph{entire} tail rather than a single quantile. 
Instead of working at a fixed confidence level $\delta$ and producing a bound, Corollary~\ref{cor:tailExpBound-UCBVI} describes how the entire curve $x\mapsto\PP(R_K\ge x)$ behaves:
for $x$ above the instance-dependent baseline $m_K^{\mathrm{mb}}$ we obtain a sub-Gaussian tail
\[
\PP(R_K\ge x)
\;\le\;
\exp\!\left(-\tilde{\mathcal O}\left(\frac{x^2}{H^3 K}\right)\right),
\]
up to a transition scale that depends on $(H,K,\alpha)$, and beyond that scale the tail becomes sub-Weibull, with decay of order $\exp\{-\tilde{\mathcal O}(K^\alpha)\}$ in the $\KD$ case or $\exp\{-\tilde{\mathcal O}(x^\alpha/H^\alpha)\}$ in the $\KI$ case.

\subsection{Model-free tail bounds: optimistic $Q$-learning}\label{sec:MF}
We analyze the tail behavior of an optimistic $Q$-learning algorithm studied in \citep{jin2018q, yang2021q}.

We consider the $K$-dependent (\text{\KD}) settings, with bonus functions defined as
\[
    b_h(n) = 4 H^{1.5} \sqrt{\dfrac{\ln(2SAHK)+\mu K^\alpha}{n}}
\]
Here $n\ge 1$ is the visit count of the associated state--action pair, and $\alpha\in[0,1]$ and $\mu>0$ are tuning parameters.

In contrast to the UCBVI algorithm, the update equations in optimistic $Q$-learning depend on the choice of step sizes.
We define the step-size sequence and its unrolled weights as
\begin{align}\label{eq:alpha-weights}
\lr^{(t)} \;\coloneqq\; \frac{H+1}{H+t+1},\qquad  \lr^{t\rightarrow t'} \;\coloneqq\; \lr^{(t)} \prod_{j=t+1}^{t'} (1-\lr^{(j)})
,\qquad t\ge 0.
\end{align}
Notice that $\sum_{t=0}^{t'} \lr^{t\to t'}=1$, with the convention that $\prod_{j=t'+1}^{t'} \equiv 1$.

\begin{algorithm}[t]
\caption{Optimistic Q-learning}
\label{alg:qlearning-tail}
\begin{algorithmic}[1]
\STATE \textbf{Input:} $H,\mathcal S,\mathcal A$, rewards $r_h:\mathcal{S}\times\mathcal{A}\to[0,1]$, initial $s_0$, bonus schedules $\{b_h(\cdot)\}$, step size schedule $\{\lr^{(t)}\}$.
\STATE \textbf{Init:} For all $(s,a,h)$, set $Q_h^0(s,a)= H$ and $V_h^0(s)= H$ and $n_h^0(s,a)= 0$. 
\FOR{$k=0,1,2,\dots,K-1$}
    \STATE \textbf{Planning:} For all $(s,h)$:
\begin{align*}
\pi_h^k(s)\in\arg\max_{a\in\mathcal{A}}Q_h^k(s,a).
\end{align*} \label{alg:planning_line}
    \STATE \textbf{Execution:} Roll out $\pii^k=\{\pi_h^k\}_{h=0}^{H-1}$ from $s_0$ under $\pp^*$:
    \begin{align*}
    a_h^k=\pi_h^k(s_h^k)\quad \text{and} \quad s_{h+1}^k\sim P_h^*(\cdot\mid s_h^k,a_h^k),\qquad \forall h\in[H-1] , \qquad a_{H-1}^k=\pi_{H-1}^k(s_{H-1}^k).
    \end{align*}
    \STATE \textbf{Update:}
    Set $V_{H}^k\equiv 0$.
    For all $h$:  
    \begin{align*}
    &t=n_h^{k+1}(s_h^k,a_h^k)= n_h^k(s_h^k,a_h^k)+1\\
    &Q_h^{k+1}(s_h^k,a_h^k) \leftarrow (1-\lr^{(t)})\,Q_h^k(s_h^k,a_h^k)
    +\lr^{(t)} \Big(r_h(s_h^k,a_h^k)+V_{h+1}^k(s_{h+1}^k)+ b_h(t)\Big),\\
    &V_h^{k+1}(s_h^k) \leftarrow \min\{H,\max_{a\in\mathcal{A}} Q_h^{k+1}(s_h^k,a)\}.
    \end{align*}
\ENDFOR
\end{algorithmic}
\end{algorithm}

Algorithm~\ref{alg:qlearning-tail} maintains optimistic action-value estimates
$\{Q_h^k(s,a)\}$ and state-value estimates $\{V_h^k(s)\}$ for each stage
$h\in\{0,1,\dots,H-1\}$, together with visitation counts $n_h^k(s,a)$.
It is initialized optimistically as
$Q_h^0(s,a)=H$, $V_h^0(s)=H$ for all $(s,a,h)$, and $n_h^0(s,a)=0$.

At the start of episode $k$, the planning step forms a greedy policy
with respect to the current $Q$-values. The execution step then rolls out $\pi^k=\{\pi_h^k\}_{h=0}^{H-1}$
from the fixed initial state $s_0$ in the unknown MDP with transitions
$P_h^*$, generating a trajectory $(s_h^k,a_h^k,s_{h+1}^k)_{h=0}^{H-1}$. After observing the trajectory, the update step performs on-trajectory
temporal-difference updates. First, the algorithm sets the terminal value $V_H^k\equiv 0$.
Then for each stage $h=0,1,\dots,H-1$, it increments the visit count of the
experienced pair $(s_h^k,a_h^k)$, and updates only the corresponding $Q$-entry using step size $\lr^{(t)}\in(0,1]$
toward a bonus-augmented TD target:
\[
Q_h^{k+1}(s_h^k,a_h^k)
=(1-\lr^{(t)})\,Q_h^k(s_h^k,a_h^k)
+\lr^{(t)}\Big(r_h(s_h^k,a_h^k)+V_{h+1}^k(s_{h+1}^k)
+b_h\big(t\big)\Big),
\]
where $t=n_h^{k+1}(s_h^k,a_h^k)$ is the visitation count to state-action $(s,a)$ at horizon $h$ at the end of episode $k$. All other action-values are left unchanged. Finally, the state-value at the visited states are updated to the greedy value and are clipped to the horizon, i.e., $V_h^{k+1}(s_h^k)=\min\{H, \max_{a\in\mathcal{A}} Q_h^{k+1}(s_h^k,a)\}$, while $V_h^{k+1}(s)=V_h^k(s)$ for all other states. The optimism is induced by the initial over-estimates and the nonnegative
bonus $b_h(\cdot)$, decreasing in the visit count to encourage
exploration early and vanish as sampling increases. 

The episode index $k$ in $Q_h^k$ and $V_h^k$ is used only for notational clarity (e.g., in the analysis) to emphasize the evolution of the estimates
across episodes. In an actual implementation, it suffices to store and update
a single table for $\{Q_h(s,a)\}$ and a single table for $\{V_h(s)\}$, updating
their entries in-place as data are collected.

To mirror Theorem~1 in our model-based section, we state a parameterized tail bound indexed by $\alpha\in[0,1]$, with an explicit sub-Gaussian component plus a residual ``bad event'' term.

\begin{theorem}
\label{thm:tailGap-Qlearning}
Fix $\alpha\in[0,1]$ and $\mu>0$. The tail of the cumulative regret $R_K$ of Algorithm \ref{alg:qlearning-tail} satisfies
\begin{align}
\PP\!\Big(R_K \ge x\Big)
\;\le\;
\exp\!\left(
-\frac{\big(x  - m_K^{\mathrm{mf}}\big)_+^2}{H^3K}
\right)
+\delta_K^{\mathrm{mf}}, \qquad \forall x\geq 0,
\label{eq:mf-tail-master}
\end{align}
where the baseline level is
\begin{align}
m_K^{\mathrm{mf}}
\;\coloneqq\;
\frac{3242 eSAH^6(\ln(2SAHK)+\mu K^\alpha)}{\gapstar},
\label{eq:mf-mK}
\end{align}
and the residual term is
\begin{align}
\delta_K^{\mathrm{mf}}
\;\coloneqq\;
\exp\!\big(-\mu K^\alpha\big)
\label{eq:mf-deltaK}
\end{align}
\end{theorem}
Similar to Theorem~\ref{thm:tailGap-UCBVI}, Theorem~\ref{thm:tailGap-Qlearning} decomposes tail of regret into a sub-Gaussian part plus a residual $\delta_K^{\mathrm{mf}}$. Finally, we derive a bound for the expected regret.
\begin{prop}
\label{prop:expectGap-UCBQVI}
Under the same setting as in Theorem \ref{thm:tailGap-Qlearning}, we have
\[
\mathbb E[R_K]
\ \le\
m_K^{\mathrm{mf}} + 2 KH^2\delta_K^{\mathrm{mf}}
\]
where $m_K^{\mathrm{mf}}$ is given in \eqref{eq:mf-mK} and $\delta_K^{\mathrm{mf}}$ in \eqref{eq:mf-deltaK}. 
\end{prop}

\section{Proof sketch}

We outline the proof of Theorem~\ref{thm:tailGap-UCBVI} and highlight where each term in the bound arises. The proof of Theorem~\ref{thm:tailGap-Qlearning} follows similar ideas and is presented in the Appendix.

\paragraph{Burn-in decomposition.}
Split the cumulative regret into an initial ``burn-in'' part and the remainder:
\[
R_K
=
\underbrace{\sum_{k=0}^{\lceil K^\gamma\rceil-1}\!\Big(V_0^*(s_0)-V_0^{\pii^k}(s_0)\Big)}_{R_K^{(0)}}
+
\underbrace{\sum_{k=\lceil K^\gamma\rceil}^{K-1}\!\Big(V_0^*(s_0)-V_0^{\pii^k}(s_0)\Big)}_{R_K^{(1)}}.
\]
Since each episode regret is at most $H$, we have $R_K^{(0)} \le H\lceil K^\gamma\rceil,$ which yields the additive shift $H\lceil K^\gamma\rceil$ in the final tail bound. This initial burn-in term is essential for the analysis of the horizon-independent algorithm. 
By tuning $\gamma$, we trade off $R_K^{(0)}$ and $R_K^{(1)}$, so that the order matches.

\paragraph{Bellman-style telescoping and a martingale decomposition.}
Fix an episode $k\ge \lceil K^\gamma\rceil$ and define the value difference
$\Dvs_h^k(s)\coloneqq V_h^*(s)-V_h^{\pii^k}(s)$.
Along the realized trajectory $(s_h^k,a_h^k)_{h=0}^{H-1}$ one can write
\[
\Dvs_h^k(s_h^k)
=
\gap_h(s_h^k,a_h^k)
+
\E\!\left[\Dvs_{h+1}^k(s_{h+1}^k)\mid \mathcal{F}_h^k\right],
\]
and define the martingale difference term
\[
\Dvsm_{h}^k(s_{h}^k)
\;\coloneqq\;
\Dvs_{h}^k(s_{h}^k)
-\E\!\left[\Dvs_{h}^k(s_{h}^k)\mid \mathcal{F}_{h-1}^k\right].
\]
Summing over $h=0,\dots,H-1$ telescopes (using $\Dvs_H^k\equiv 0$) to give
\begin{equation}\label{eq:sketch-telescope}
V_0^*(s_0)-V_0^{\pii^k}(s_0)
=
\sum_{h=0}^{H-1}\gap_h(s_h^k,a_h^k)
-
\sum_{h=0}^{H-1}\Dvsm_{h+1}^k(s_{h+1}^k).
\end{equation}
We introduce the $\gap$ function because it enables instance-dependent guarantees.

\paragraph{Clipping and the role of the global gap.}
We define the clipping function
\[
\clip(x)=0 \ \text{if } x<\gapstar,\qquad \clip(x)=x \ \text{otherwise}.
\]
By definition of the global gap $\gapstar$, for every $(s,h)$ we have
$\gap_h(s,a)\in\{0\}\cup[\gapstar,\infty)$, hence $\gap_h(s_h^k,a_h^k)=\clip(\gap_h(s_h^k,a_h^k))$.
Thus \eqref{eq:sketch-telescope} becomes
\[
V_0^*(s_0)-V_0^{\pii^k}(s_0)
=
\sum_{h=0}^{H-1}\clip\!\big(\gap_h(s_h^k,a_h^k)\big)
-
\sum_{h=0}^{H-1}\Dvsm_{h+1}^k(s_{h+1}^k).
\]
The purpose of $\clip(\cdot)$ is that it lets us upper bound \emph{only the suboptimal-action contributions}
via bonuses, while treating the remaining fluctuations as martingale noise.

\paragraph{A high-probability ``good event'' implying optimism.}
Define the per-episode good event
\[
\mathcal{G}^k
\;=\;
\bigcap_{h=0}^{H-2}\bigcap_{s,a}
\left\{
\|P_h^*(\cdot\mid s,a)-\hat P_h^k(\cdot\mid s,a)\|_1
\le
\frac{2b_h^k(n_h^k(s,a))}{H-h-1}
\right\}.
\]
On $\mathcal{G}^k$, a standard backward induction yields optimism:
\begin{equation}\label{eq:sketch-optimism}
Q_h^k(s,a)\ge Q_h^*(s,a),\qquad V_h^k(s)\ge V_h^*(s),\qquad \forall (s,a,h).
\end{equation}
Let $\bar{\mathcal{G}}^K\coloneqq \cap_{k=\lceil K^\gamma\rceil}^{K-1}\mathcal{G}^k$ denote the good event over
all post burn-in episodes.

\paragraph{Bounding the bad-event probability.}
Using a union bound over $(s,a,h)$ and over visit counts $n\le K$, and applying
\cite[Theorem 2.1]{weissman2003inequalities} to control
$\|\hat P_h^k(\cdot\mid s,a)-P_h^*(\cdot\mid s,a)\|_1$ gives
\[
\PP\big((\bar{\mathcal{G}}^K)^c\big)
\le
SAHK\cdot
\begin{cases}
\exp(-2\mu K^\alpha), & \KD,\\
\exp(-2\mu K^{\gamma\alpha}), & \KI,
\end{cases}
\;=\;
\delta_K^{\mathrm{mb}}(\gamma).
\]

\paragraph{Regret on the good event $\leq$ deterministic ``bonus budget'' + martingale.}
On $\bar{\mathcal{G}}^K$, optimism \eqref{eq:sketch-optimism} lets us compare the clipped gap
to the algorithmic optimism gap $\Dvk_h^k(s)\coloneqq V_h^k(s)-V_h^{\pii^k}(s)$. More specifically, on $\bar{\mathcal{G}}^K$, $\clip(\gap(s,a))\leq \clip(\Dvk_h^k(s))$. A one-step recursion for $\Dvk_h^k(s_h^k)$ implies
\[
\Dvk_h^k(s_h^k)\mathbbm{1}\{\mathcal{G}^k\}
\;\le\;
2\,b_h^k(n_h^k(s_h^k,a_h^k))\mathbbm{1}\{\mathcal{G}^k\}
+
\E\!\left[\Dvk_{h+1}^k(s_{h+1}^k)\mathbbm{1}\{\mathcal{G}^k\}\mid \mathcal{F}_h^k\right],
\]
and telescoping over $h$ yields an upper bound of the form
\[
\Dvk_h^k(s_h^k)\mathbbm{1}\{\mathcal{G}^k\}
\;\lesssim\;
\sum_{l=h}^{H-1} b_l^k(n_l^k(s_l^k,a_l^k))\mathbbm{1}\{\mathcal{G}^k\}
\;-\;(\text{martingale terms}).
\]
Combining this with the clipping inequality $\clip\!\left(\sum_{i=1}^H a_i\right)\le \sum_{i=1}^H \clip(Ha_i)$ for all $a_i\ge 0$, one obtains a bound of the form
\begin{equation}\label{eq:sketch-main-decomp}
R_K^{(1)}\mathbbm{1}\{\bar{\mathcal{G}}^K\}
\;\le\;
\underbrace{\sum_{k=\lceil K^\gamma\rceil}^{K-1}\sum_{h=0}^{H-1}(h+1)\,
\clip\!\Big(4H\,b_h^k(n_h^k(s_h^k,a_h^k))\Big)}_{\text{bonus budget}}
\;-\;
\underbrace{\sum_{k=\lceil K^\gamma\rceil}^{K-1}\sum_{h=0}^{H-1}\Dvm_{h+1}^k}_{\text{martingale}},
\end{equation}
where $\{\Dvm_{h+1}^k\}$ is a martingale difference sequence.

\paragraph{Bounding the bonus budget by $m_K^{\mathrm{mb}}$ using the gap.}
Because $\clip(x)=0$ for $x<\gapstar$, the summand in the bonus budget is nonzero only while
\[
4H\,b_h^k(n)\;\ge\;\gapstar,
\quad\text{i.e.,}\quad
n\;\leq\;\frac{16H^4(0.5S\ln2+\mu K^\alpha)}{{\gapstar}^2}.
\]
Thus, for each $(s,a,h)$, only the first $\bar n=\lfloor 16H^4(0.5S\ln2+\mu K^\alpha)/{\gapstar}^2\rfloor$ visits contribute,
and summing $\sum_{n=1}^{\bar n}\frac{1}{\sqrt{n}}\le 2\sqrt{\bar n}$ gives
\[
\sum_{k=\lceil K^\gamma\rceil}^{K-1}\sum_{h=0}^{H-1}(h+1)\,
\clip\!\Big(4H\,b_h^k(n_h^k(s_h^k,a_h^k))\Big)
\;\le\;
\frac{32H^6SA(0.5S\ln2+\mu K^\alpha)}{\gapstar}
\;=\;
m_K^{\mathrm{mb}}.
\]

\paragraph{Sub-Gaussian tail from Azuma--Hoeffding.}
The martingale increments $\Dvm_{h+1}^k$ are uniformly bounded by $5H^2$. There are at most $HK$ such increments, hence Azuma--Hoeffding yields
\[
\PP\!\left(
-\sum_{k=\lceil K^\gamma\rceil}^{K-1}\sum_{h=0}^{H-1}\Dvm_{h+1}^k
\ge t
\right)
\le
\exp\!\left(
-\frac{t^2}{50H^5K}
\right).
\]
Setting $t=(x-H\lceil K^\gamma\rceil-m_K^{\mathrm{mb}})_+$ and using
\eqref{eq:sketch-main-decomp} gives the sub-Gaussian term in the theorem.

\paragraph{Combine with the bad event.}
Finally, decompose
\[
\PP(R_K\ge x)
\le
\PP\big((\bar{\mathcal{G}}^K)^c\big)
+
\PP\big(\{R_K\ge x\}\cap \bar{\mathcal{G}}^K\big),
\]
and combining with the above inequalities we get the result in Theorem~\ref{thm:tailGap-UCBVI}.

\section{Proof Analysis}
\subsection{Proof of Theorem \ref{thm:tailGap-UCBVI}}

Recall that $\pp^* = \{P^*_h\}_{h=0}^{H-2}$ denotes the true transition kernel.  At episode $k$, the algorithm computes a greedy policy $\pi^k = \{\pi^k_h\}_{h=0}^{H-1}$ using an optimistic estimate of the optimum value function $\{V^*_h\}_{h=0}^{H-1}$ based on data collected up to episode $k$. Recall that $R_K$ denote the cumulative regret incurred up to episode $K-1$. Then, writing $s_0^k\equiv s_0$, we have:
\begin{align*}
R_K 
&= \sum_{k=0}^{K-1} \left(V_0^{*}(s_0^k) 
   -  V_0^{\pii^k}(s_0^k)\right)
= \underbrace{\sum_{k=0}^{\lceil K^\gamma\rceil-1} \left( V_0^{*}(s_0^k) - V_0^{\pii^k}(s_0^k)\right) }_{R_K^{(0)}}
 + \underbrace{\sum_{k=\lceil K^\gamma\rceil}^{K-1} \left(V_0^{*}(s_0^k) - V_0^{\pii^k}(s_0^k)\right)}_{R_K^{(1)}}.
\end{align*}
Here, $R_K^{(0)} \le \lceil K^\gamma \rceil H$ denotes the initial regret incurred during episodes with insufficient data. We continue by analyzing $R_K^{(1)}$. Throughout this proof, we define the following clipping function:
\begin{align*}
\clip(x) \;=\;
\begin{cases}
0, & \text{if } x < \gapstar,\\[4pt]
x, & \text{otherwise.}
\end{cases}
\end{align*}

The regret associated with any episode $k\geq\ceil{K^\gamma}$ is given by
\begin{align*}
    V_0^{*}(s_0^k) - V_0^{\pii^k}(s_0^k) &= V_0^{*}(s_0^k) - Q_0^{*}(s_0^k,a_0^k) + (Q_0^{*}(s_0^k,a_0^k) - Q_0^{\pii^k}(s_0^k,a_0^k))\\
    &= \gap_0(s_0^k,a_0^k) + \E\left[V_1^{*}(s_1^k) - V_1^{\pii^k}(s_1^k)\big| \mathcal{F}_0^{k} \right]
\end{align*}
where $\mathcal{F}_h^k = \sigma(\{s_0^0, a_0^0, s_1^0, a_1^0, \ldots, s_{H-1}^0, a_{H-1}^0, s_0^1, a_0^1, \ldots, s_0^k, a_0^k, \ldots, s_{h}^k, a_{h}^k\})$ represents the history of states and actions observed by the algorithm up to episode $k$ and step $h$, and $a_h^k = \pi_h^k(s_h^k)$. Define the value difference $\Dvs_h^k(s) \coloneqq V_h^{*}(s) - V_h^{\pii^k}(s)$. Following a similar argument as above, for all $h\in [H]$, we have
\begin{align*}
    \Dvs^k_h(s_h^k) &= \gap_h(s_h^k,a_h^k)+ \E\left[\Dvs^k_{h+1}(s_{h+1}^k)\big| \mathcal{F}_h^{k} \right]\\
    &= \gap_h(s_h^k,a_h^k) + \Dvs^k_{h+1}(s_{h+1}^k) - \Dvsm^k_{h+1}(s_{h+1}^k),
\end{align*}
where the sequence $\{\Dvsm^k_{h}(s_{h}^k)\}_{h=1,k=0}^{H-1,K-1}$ defined as 
\begin{align*}
    \Dvsm^k_{h}(s_{h}^k) = \Dvs^k_{h}(s_{h}^k) - \E\left[\Dvs^k_{h}(s_{h}^k)\big| \mathcal{F}_{h-1}^{k} \right]
\end{align*}
is a martingale difference sequence with respect to the filtration $\{\mathcal{F}_{h-1}^{k}\}_{h=1,k=0}^{H-1,K-1}$. For the sake of notational simplicity, we define $\Dvsm^k_{H}\equiv 0$ for all $k\in[K]$.

Let $\upsilon^k \in \{0,1,\cdots,H-1,H\}$ denote the first horizon at which a non-optimal action is taken at episode $k$, i.e.,
\begin{align*}
    \upsilon^k \coloneqq \inf \{h: a_h^k \notin \mathcal{A}^*_h(s_h^k)\} 
\end{align*}
with the convention that $\inf\emptyset = H$. Notice that for any $h< \upsilon^k $, we have $\gap_h(s_h^k,a_h^k) = 0$. Using a telescopic sum, noting that $\Dvs^k_H\equiv 0$, we obtain
\begin{align}
    \Dvs^k_0(s_0^k) &= \Dvs^k_{\upsilon^k}(s_{\upsilon^k}^k) - \sum_{h=0}^{{\upsilon^k}-1}\Dvsm^k_{h+1}(s_{h+1}^k) \label{eq:Dvs-equality}
\end{align}
Note that $\Dvs^k_{\upsilon^k}(s_{\upsilon^k}^k) = V_{\upsilon^k}^{*}(s_{\upsilon^k}^k) - V_{\upsilon^k}^{k}(s_{\upsilon^k}^k) + V_{\upsilon^k}^{k}(s_{\upsilon^k}^k) -V_{\upsilon^k}^{\pii^k}(s_{\upsilon^k}^k)$. The bonus function $b_h^k(\cdot)$ is chosen so that $V_h^k$ provides an {optimistic estimate} of $V_h^*$, i.e. $V_h^k \geq V_h^*$, as long as the estimated transition kernel $\hat{P}_{h'}^k$ and the true transition kernel $P_{h'}^*$ remain close 
for all horizons $h' \geq h$. This combined with the monotonicity of the $\clip(\cdot)$ function provides an upper bound for the regret at episode $k\geq \lceil K^\gamma \rceil$. We define the ``good event at episode $k$'' as 
\begin{align*}
\mathcal{G}^k =& \bigcap_{h=0}^{H-2}\bigcap_{s,a}\left\{\|P_h^*(\cdot|s,a)-\hat P_h^k(\cdot|s,a)\|_1 \leq \frac{2b_h^k(n_h^k(s,a))}{H-h-1} \right\}.
\end{align*}

Under the event $\mathcal{G}^k$, we claim that $Q_h^k(s,a) \geq  Q_h^*(s,a)$ for all $(s,a,h)\in\mathcal{S}\times\mathcal{A}\times[H]$. We establish this by induction on the horizon. At the terminal horizon, we have $Q_{H-1}^k(s,a) = Q_{H-1}^*(s,a)$ for all $(s,a) \in \mathcal{S} \times \mathcal{A}$. 
Assuming $Q_{h+1}^k(s,a) \ge Q_{h+1}^*(s,a)$ for all $(s,a)$, we now show that the same inequality holds for horizon $h$.
\begin{align*}
    Q_h^k(s,a) - Q_h^*(s,a)
    = \min\Bigg\{ &
    \sum_{s'} \hat P_h^k(s'|s,a)\big(V_{h+1}^k(s') - V_{h+1}^*(s')\big) \\
    &\quad+ \sum_{s'} \big(\hat P_h^k(s'|s,a) - P_h^*(s'|s,a)\big)V_{h+1}^*(s') 
    + b_h^k\big(n_h^k(s,a)\big), \\
    &\quad H - h - Q_h^*(s,a)\Bigg\}.
\end{align*}

The first term is nonnegative by induction hypothesis, while the second term is bounded as follows:
\begin{align*}
    &\sum_{s'} \!\left(\hat P_h^k(s'|s,a) - P_h^*(s'|s,a)\right)V_{h+1}^*(s')\\
    &\qquad = \sum_{s'} \!\left(\hat P_h^k(s'|s,a) - P_h^*(s'|s,a)\right)
        \left(V_{h+1}^{*}(s') - c\mathbf{1}\right) \\
    &\qquad \ge -\|\hat P_h^k(\cdot|s,a) - P_h^*(\cdot|s,a)\|_1 
       \,\|V_{h+1}^{*} - c\mathbf{1}\|_\infty \tag{Hölder's inequality}\\
    &\qquad = -\tfrac{1}{2}\,\mathrm{span}\big(V_{h+1}^{*}\big)
       \,\|\hat P_h^k(\cdot|s,a) - P_h^*(\cdot|s,a)\|_1 \\
    &\qquad \ge -b_h^k(n_h^k(s,a))\tag{by definition of ${\mathcal{G}}^k$},
\end{align*}
where $c = \tfrac{1}{2}\big(\max_x V_{h+1}^{*}(x) + \min_x V_{h+1}^{*}(x)\big),$ $\mathrm{span}(f) = \max_x f(x) - \min_x f(x)$, and $\mathbf{1}$ denotes the all-ones vector. Hence, under $\bar{\mathcal{G}}^K$, taking maxima and recursing yields
\begin{align}    
    Q_h^k(s,a) \geq Q_h^*(s,a),
    \qquad 
    V_h^k(s)\ge V_h^*(s),\label{eq:good_event}
\end{align}
for all $(s,a,h)\in\mathcal{S}\times\mathcal{A}\times[H]$. 

Defining $\bar{\mathcal{G}}^K \coloneqq\cap_{k=\lceil K^\gamma\rceil}^{K-1}\mathcal{G}^k$, we next bound the probability of the bad event $\left(\bar{\mathcal{G}}^K\right)^\mathrm{c}$:
\begin{align*}
    &\PP \left(\left(\bar{\mathcal{G}}^K\right)^\mathrm{c}\right) \\
    &= \PP\left(\exists s,a,h,k \in [\lceil K^\gamma\rceil,K ]:
        \|P_h^*(\cdot|s,a)-\hat P_h^k(\cdot|s,a)\|_1 
        >  \frac{2b_h^k(n_h^k(s,a))}{H-h-1}\right) \\
    &\leq \sum_{s,a,h} 
        \PP\left(\exists k \in [\lceil K^\gamma\rceil,K ]:
        \sum_{s'} \left| P_h^*(s'|s,a)-
            \frac{n_h^k(s,a,s')}{n_h^k(s,a)}\right| 
        >  \frac{2b_h^k(n_h^k(s,a))}{H-h-1}\right) \tag{union bound}\\
    &= \sum_{s,a,h} 
        \PP\left(\exists k \in [\lceil K^\gamma\rceil,K ]:
        \sum_{s'} \left| P_h^*(s'|s,a)-
            \frac{\sum_{i=0}^{k-1}\mathbbm{1}\{(s^{i}_h,a^{i}_h,s^{i}_{h+1})=(s,a,s')\}}
                 {n_h^k(s,a)}\right| 
        >  \frac{2b_h^k(n_h^k(s,a))}{H-h-1}\right) \\
    &\overset{(a)}{\leq} \sum_{s,a,h} 
        \PP\left(
        \exists n \in \{1,2,\ldots,K\}:
        \sum_{s'} \left| P_h^*(s'|s,a)-
            \frac{\sum_{i=1}^{n}\mathbbm{1}\{X_i(s,a)=s'\}}{n}\right| 
        >  \frac{2b_h^{n \vee \lceil K^\gamma\rceil}(n)}{H-h-1} \right) \\
    &\leq \sum_{s,a,h}\sum_{n= 1}^K 
        \PP\left(
        \sum_{s'} \left| P_h^*(s'|s,a)-
            \frac{\sum_{i=1}^{n}\mathbbm{1}\{X_i(s,a)=s'\}}{n}\right| 
        >  \frac{2b_h^{n \vee \lceil K^\gamma\rceil}(n)}{H-h-1} \right) \tag{union bound}\\
    &\leq \sum_{s,a,h}\sum_{n=1}^K 
        2^{S}\exp\left(-  \frac{n}{2}  \left(\frac{2b_h^{n \vee \lceil K^\gamma\rceil}(n)}{H-h-1} \right)^2\right)
        \tag{\cite[Theorem 2.1]{weissman2003inequalities}}\\
    & \leq S A H K
    \begin{cases}
        \displaystyle \
           \exp\left(- 2\mu  K^{\alpha}\right),
        & \text{\KD}, \\[2ex]
        \displaystyle \ \exp\left(-2\mu K^{\gamma\alpha}\right),
        & \text{\KI}.
    \end{cases} \eqqcolon \delta_K^{\mathrm{mb}}(\gamma) \tag{Lemma \ref{lem:sumUpperBound-UCBVI}}
\end{align*}
where in step (a), $X_i(s,a)\overset{\text{i.i.d.}}{\sim} P_h^*(\cdot \mid s,a)$, and the inequality follows from the facts that $n_h^k(s,a) \leq K$, a coupling argument, and that for any $n\leq k$ with $k \geq K^\gamma$ we have $b_h^{n \vee \lceil K^\gamma\rceil}(n) < b_h^k(n)$.

Decomposing the tail probability of the regret into the contribution of the bad event and its complement, we have
\begin{align*}
    \PP(R_K\geq x) &\leq \PP(R_K^{(1)} \geq x-\lceil K^\gamma\rceil H)\\
    &\leq \PP\left(\left(\bar{\mathcal{G}}^K\right)^\mathrm{c}\right) + \PP\left( \left\{R_K^{(1)}\geq x-\lceil K^\gamma\rceil H\right\}\cap \bar{\mathcal{G}}^K\right)\\
    &\leq \delta_K^{\mathrm{mb}}(\gamma) + \PP\left( \left\{R_K^{(1)}\geq x-\lceil K^\gamma\rceil H\right\}\cap \bar{\mathcal{G}}^K\right)\\
    &= \delta_K^{\mathrm{mb}}(\gamma) + \PP\left( R_K^{(1)} \mathbbm{1}\{\bar{\mathcal{G}}^K\}\geq x-\lceil K^\gamma\rceil H \right). \tag{since $x > \lceil K^\gamma\rceil H$}
\end{align*}

We proceed with bounding the cumulative regret $R_K^{(1)} = \sum_{k=\lceil K^\gamma\rceil}^{K-1} \left(V_0^{*}(s_0^k) - V_0^{\pii_k}(s_0^k)\right)$, under the event $\bar{\mathcal{G}}^K$.
\begin{align*}
    R_K^{(1)} \mathbbm{1}\{\bar{\mathcal{G}}^K\} 
    &= \sum_{k=\lceil K^\gamma\rceil}^{K-1}\left[\Dvs^k_{\upsilon^k}(s_{\upsilon^k}^k) - \sum_{h=0}^{{\upsilon^k}-1}\Dvsm^k_{h+1}(s_{h+1}^k)\right]\mathbbm{1}\{\bar{\mathcal{G}}^K\} \tag{by \eqref{eq:Dvs-equality}}\\
    &=\sum_{k=\lceil K^\gamma\rceil}^{K-1}\left[ V_{\upsilon^k}^{*}(s_{\upsilon^k}^k) \pm V_{\upsilon^k}^{k}(s_{\upsilon^k}^k) -V_{\upsilon^k}^{\pii^k}(s_{\upsilon^k}^k) - \sum_{h=0}^{{\upsilon^k}-1}\Dvsm^k_{h+1}(s_{h+1}^k)\right]\mathbbm{1}\{\bar{\mathcal{G}}^K\}\\
    &=\sum_{k=\lceil K^\gamma\rceil}^{K-1}\left[ \left(V_{\upsilon^k}^{*}(s_{\upsilon^k}^k) \pm V_{\upsilon^k}^{k}(s_{\upsilon^k}^k) -V_{\upsilon^k}^{\pii^k}(s_{\upsilon^k}^k)\right)\mathbbm{1}\{\mathcal{G}^k\} - \sum_{h=0}^{{\upsilon^k}-1}\Dvsm^k_{h+1}(s_{h+1}^k)\right]\mathbbm{1}\{\bar{\mathcal{G}}^K\}\\
    &\leq\sum_{k=\lceil K^\gamma\rceil}^{K-1}\left[ \left(V_{\upsilon^k}^{k}(s_{\upsilon^k}^k) -V_{\upsilon^k}^{\pii^k}(s_{\upsilon^k}^k)\right) \mathbbm{1}\{\mathcal{G}^k\} - \sum_{h=0}^{{\upsilon^k}-1}\Dvsm^k_{h+1}(s_{h+1}^k)\right]\mathbbm{1}\{\bar{\mathcal{G}}^K\}\tag{by \eqref{eq:good_event}}\\
    &=\sum_{k=\lceil K^\gamma\rceil}^{K-1}\left[ \Dvk_{\upsilon^k}^k(s_{\upsilon^k}^k) \mathbbm{1}\{\mathcal{G}^k\} - \sum_{h=0}^{{\upsilon^k}-1}\Dvsm^k_{h+1}(s_{h+1}^k)\right]\mathbbm{1}\{\bar{\mathcal{G}}^K\},\numberthis\label{eq:R_K_1_mart}
\end{align*}
where $\Dvk_h^k(s_h^k) \coloneqq V_h^{k}(s_h^k) - V_h^{\pii^k}(s_h^k)$. We express $\Dvk_h^k(s_h^k)$ recursively, introducing terms that can be bounded either via concentration inequalities for $\hat{\pp}_k$ around $\pp^*$ or by the bonus functions.
\begin{align*}
\Dvk_h^k(s_h^k) \mathbbm{1}\{\mathcal{G}^k\}
&\leq \mathbbm{1}\{\mathcal{G}^k\} b_h^k(n_h^k(s_h^k,a_h^k))  + \mathbbm{1}\{\mathcal{G}^k\} \sum_{s'} \left( \hat{P}^k_h(s'\mid s_h^k,a_h^k) - P^*_h(s'\mid s_h^k,a_h^k)\right) V_{h+1}^{k}(s') \\
& \qquad +\mathbbm{1}\{\mathcal{G}^k\} \sum_{s'} P^*_h(s'\mid s_h^k,a_h^k) \left( V_{h+1}^{k}(s')  - V_{h+1}^{\pii^k}(s')\right) \\
& \stackrel{(a)}{\leq} \mathbbm{1}\{\mathcal{G}^k\} b_h^k(n_h^k(s_h^k,a_h^k))  + \frac{H-h-1}{2} \mathbbm{1}\{\mathcal{G}^k\} \|P_h^*(\cdot \mid s_h^k, a_h^k) - \hat{P}_h^k(\cdot \mid s_h^k, a_h^k)\|_1  \\
&\qquad+ \mathbbm{1}\{\mathcal{G}^k\}\sum_{s'} P^*_h(s'\mid s_h^k,a_h^k) \Dvk_{h+1}^k(s')\\
& \leq 2 \mathbbm{1}\{\mathcal{G}^k\} b^k_h(n_h^k(s_h^k, a_h^k))  + \mathbbm{1}\{\mathcal{G}^k\} \sum_{s'} P^*_h(s'\mid s_h^k,a_h^k) \Dvk_{h+1}^k(s') \\
& = 2 \mathbbm{1}\{\mathcal{G}^k\} b^k_h(n_h^k(s_h^k, a_h^k))  + \mathbb{E}\left[ \mathbbm{1}\{\mathcal{G}^k\}\Dvk_{h+1}^{k}(s_{h+1}^k) \mid \mathcal{F}_h^k \right]\\
& = 2 \mathbbm{1}\{\mathcal{G}^k\} b^k_h(n_h^k(s_h^k, a_h^k))  + \Dvk_{h+1}^{k}(s_{h+1}^k)\mathbbm{1}\{\mathcal{G}^k\} - \Dvkm_{h+1}^{k}(s_{h+1}^k)\mathbbm{1}\{\mathcal{G}^k\}\numberthis\label{eq:Dvk-bound}.
\end{align*}
where $(a)$ follows from H\"{o}lder's inequality, and noting that $\sum_{s} ( P_h^*(s \mid s_h^k, a_h^k) - \hat{P}_h^k(s \mid s_h^k, a_h^k) ) = 0$, and the sequence $\{\Dvkm^k_{h}(s_{h}^k)\}_{h=1,k=0}^{H-1,K-1}$ defined as 
\begin{align*}
    \Dvkm^k_{h}(s_{h}^k) = \Dvk^k_{h}(s_{h}^k) - \E\left[\Dvk^k_{h}(s_{h}^k)\big| \mathcal{F}_{h-1}^{k} \right],
\end{align*}
is a martingale difference sequence with respect to the filtration $\{\mathcal{F}_{h-1}^{k}\}_{h=1,k=0}^{H-1,K-1}$. For the sake of notational simplicity, we define $\Dvkm^k_{H}\equiv 0$ for all $k\in[K]$. Summing \eqref{eq:Dvk-bound} over $\{h,h+1,\cdots,H-1\}$, the terms telescope, yielding
\begin{align}
    \Dvk_h^k(s_h^k)\mathbbm{1}\{\mathcal{G}^k\} \leq 2 \mathbbm{1}\{\mathcal{G}^k\} \sum_{l=h}^{H-1}  b^k_l(n_l^k(s_l^k, a_l^k))  -  \mathbbm{1}\{\mathcal{G}^k\} \sum_{l=h}^{H-1} \Dvkm_{l+1}^{k}(s_{l+1}^k).\label{eq:Dvk-finalBound}
\end{align}
Notice that the action taken at horizon $\upsilon^k$ is suboptimal. Hence, we have
\begin{align*}
    \Dvk_{\upsilon^k}^k(s_{\upsilon^k}^k)\mathbbm{1}\{\mathcal{G}^k\} &= \left(V_{\upsilon^k}^{k}(s_{\upsilon^k}^k) -V_{\upsilon^k}^{\pii^k}(s_{\upsilon^k}^k) \right) \mathbbm{1}\{\mathcal{G}^k\}\\
    &\geq \left( V_{\upsilon^k}^{*}(s_{\upsilon^k}^k) -V_{\upsilon^k}^{\pii^k}(s_{\upsilon^k}^k)\right) \mathbbm{1}\{\mathcal{G}^k\} \tag{by \eqref{eq:good_event}}\\
    &\geq \left( V_{\upsilon^k}^{*}(s_{\upsilon^k}^k) -Q_{\upsilon^k}^{*}(s_{\upsilon^k}^k,a_{\upsilon^k}^k) \right) \mathbbm{1}\{\mathcal{G}^k\}\tag{by $V_{\upsilon^k}^{\pii^k}(s_{\upsilon^k}^k) = Q_{\upsilon^k}^{\pii^k}(s_{\upsilon^k}^k,a_{\upsilon^k}^k)\leq Q_{\upsilon^k}^{*}(s_{\upsilon^k}^k,a_{\upsilon^k}^k)$  }\\
    &=\gap_{\upsilon^k}(s_{\upsilon^k}^k,a_{\upsilon^k}^k) \\
    &\geq \gapstar,
\end{align*}
which implies that $\Dvk_{\upsilon^k}^k(s_{\upsilon^k}^k) \mathbbm{1}\{\mathcal{G}^k\}= \clip\left(\Dvk_{\upsilon^k}^k(s_{\upsilon^k}^k)\right)\mathbbm{1}\{\mathcal{G}^k\}$. Substituting this into $\eqref{eq:R_K_1_mart}$ yields
\begin{align*}
    R_K^{(1)} \mathbbm{1}\{\bar{\mathcal{G}}^K\}
    &\leq \sum_{k=\lceil K^\gamma\rceil}^{K-1}\left[ \clip\left(\Dvk_{\upsilon^k}^k(s_{\upsilon^k}^k) \right) \mathbbm{1}\{\mathcal{G}^k\} - \sum_{h=0}^{{\upsilon^k}-1}\Dvsm^k_{h+1}(s_{h+1}^k)\right]\mathbbm{1}\{\bar{\mathcal{G}}^K\}\\
    &\leq \sum_{k=\lceil K^\gamma\rceil}^{K-1}\left[ \clip\left(2 \sum_{l=\upsilon^k}^{H-1}  b^k_l(n_l^k(s_l^k, a_l^k))  -  \sum_{l=\upsilon^k}^{H-1} \Dvkm_{l+1}^{k}(s_{l+1}^k) \right) - \sum_{h=0}^{{\upsilon^k}-1}\Dvsm^k_{h+1}(s_{h+1}^k)\right]\mathbbm{1}\{\bar{\mathcal{G}}^K\} \tag{by \eqref{eq:Dvk-finalBound}}\\
     &\leq \sum_{k=\lceil K^\gamma\rceil}^{K-1}\left[ \clip\left(4 \sum_{l=\upsilon^k}^{H-1}  b^k_l(n_l^k(s_l^k, a_l^k))\right)  - 2 \sum_{l=\upsilon^k}^{H-1} \Dvkm_{l+1}^{k}(s_{l+1}^k) - \sum_{h=0}^{{\upsilon^k}-1}\Dvsm^k_{h+1}(s_{h+1}^k)\right]\mathbbm{1}\{\bar{\mathcal{G}}^K\} \tag{by Lemma \ref{lem:clip_ineq}}\\
     &= \sum_{k=\lceil K^\gamma\rceil}^{K-1}\left[ \clip\left(4 \sum_{l=\upsilon^k}^{H-1}  b^k_l(n_l^k(s_l^k, a_l^k))\right)  - \sum_{l=0}^{H-1} \Dvm_{l+1}^{k}(s_{l+1}^k) \right]\mathbbm{1}\{\bar{\mathcal{G}}^K\} \numberthis \label{eq:R_K_1_final_bound}
\end{align*} 
where the sequence $\{\Dvm^k_{h}(s_{h}^k)\}_{h=1,k=0}^{H-1,K-1}$ defined as 
\begin{align*}
    \Dvm_{h}^k (s_{h}^k) \coloneqq \Dvsm_{h}^k(s_{h}^k) \mathbbm{1}\{h \leq \upsilon^k\} + 2\Dvkm_{h}^k(s_{h}^k) \mathbbm{1}\{h > \upsilon^k\}.
\end{align*}
is a martingale difference sequence with respect to the filtration $\{\mathcal{F}_{h-1}^{k}\}_{h=1,k=0}^{H-1,K-1}$. Crucially, because $\upsilon^k$ is the step of the first suboptimal action, the event $\{h \leq \upsilon^k\}$ depends only on whether actions $a_0^k, \dots, a_{h-1}^k$ are optimal, meaning both $\mathbbm{1}\{h \leq \upsilon^k\}$ and $\mathbbm{1}\{h > \upsilon^k\}$ are $\mathcal{F}_{h-1}^k$-measurable. Therefore, $\Dvm_{h}^k$ is a valid martingale difference sequence with respect to $\mathcal{F}_{h-1}^k$. Also, we have $\mathbbm{1}\{\mathcal{G}^k\}  | \Dvm_{h}^k (s_{h}^k)| \leq 2(H-h)$.
Next, we bound the first term in \eqref{eq:R_K_1_final_bound}. We have
\begin{align*}
     \sum_{k=\lceil K^\gamma\rceil}^{K-1} \clip\left(4 \sum_{l=\upsilon^k}^{H-1}  b^k_l(n_l^k(s_l^k, a_l^k))\right) &\leq \sum_{k=0}^{K-1} \clip\left(4 \sum_{l=0}^{H-1}  b^k_l(n_l^k(s_l^k, a_l^k))\right) \\
     &= 4\sum_{k=0}^{K-1}\sum_{n=1}^N  w^k(n)\sum_{l=0}^{H-1}   b^k_l(n_l^k(s_l^k, a_l^k))\\
     &= 4\sum_{n=1}^N\sum_{k=0}^{K-1}  w^k(n)\sum_{l=0}^{H-1}   b^k_l(n_l^k(s_l^k, a_l^k))\numberthis\label{eq:16_further_bound}
\end{align*}
where $w^k(n) \coloneqq \mathbbm{1}\{4 \sum_{l=0}^{H-1}  b^k_l(n_l^k(s_l^k, a_l^k)) \in [2^{n-1}\gapstar,2^n\gapstar)\}$, $N = \lceil\log_2\left(4 H b_{\max}/\gapstar\right)\rceil$, and $b_{\max} = H\sqrt{0.5 S \ln 2 + \mu K^\alpha}$. For a fixed $n\in[N+1]$, we have
\begin{align*}
    \sum_{k=0}^{K-1} w^k(n) \sum_{l=0}^{H-1} b^k_l(n_l^k(s_l^k, a_l^k)) &= \sum_{l=0}^{H-1}\sum_{k=0}^{K-1} w^k(n)  b^k_l(n_l^k(s_l^k, a_l^k))\\
    &= \sum_{l=0}^{H-1} \sum_{(s,a)}\,  \sum_{\substack{k=0\\(s_l^k, a_l^k) = (s,a)} }^K w^k(n) b^k_l(n_l^k(s, a))\\
    &\leq \sum_{l=0}^{H-1} \sum_{(s,a)} \, \sum_{i=1}^{n_l^K(s, a)}  w^{\tau(s,a,i)}(n) \frac{b_{\max}}{\sqrt{i}}\\
    &\leq \sum_{l=0}^{H-1}\sum_{(s,a)} \, \sum_{i=1}^{C_l{(s,a;n)}} \frac{b_{\max}}{\sqrt{i}}\\
    &\leq 2 b_{\max}\sum_{l=0}^{H-1} \sum_{(s,a)} \sqrt{C_l{(s,a;n)}}\\
    &\leq 2 b_{\max} \sqrt{H S AC(n)} \numberthis\label{eq:C_n_sqrt}
\end{align*}
where the last inequality follows by Cauchy-Schwartz and $C(n) = \sum_{(s,a)}\sum_{l=0}^{H-1} C_l{(s,a;n)}$, $C_l{(s,a;n)} = \sum_{i=1}^{n_l^K(s, a)} w^{\tau(s,a,i)}(n)$, and $\tau(s,a,i)$ denotes the episode index in which action $(s,a)$ at horizon $l$ is taken for the $i$-th time. On the other hand, we have the following lower bound:
\begin{align}
    \sum_{k=0}^{K-1} w^k(n)\sum_{l=0}^{H-1}   4 b^k_l(n_l^k(s_l^k, a_l^k)) \geq 2^{n-1}\gapstar C(n).\label{eq:C_n}
\end{align}
Hence, combining \eqref{eq:C_n_sqrt} and \eqref{eq:C_n}, we have 
\begin{align*}
    C(n) \leq  \frac{b^2_{max} H S A}{4^{n-4}(\gapstar)^2}
\end{align*}
Hence, \eqref{eq:16_further_bound} is bounded as follows:
\begin{align*}
     \sum_{k=\lceil K^\gamma\rceil}^{K-1} \clip\left(4 \sum_{l=\upsilon^k}^{H-1}  b^k_l(n_l^k(s_l^k, a_l^k))\right) &\leq \sum_{n=1}^N \frac{8 b^2_{\max} H S A}{ 2^{n-4} \gap^*} \leq \frac{128 H^3 S A (0.5 S \ln 2 + \mu K^\alpha)}{\gapstar}  \eqqcolon m_K^{\mathrm{mb}} \numberthis\label{eq:m_k^mb-ineq}
\end{align*}
Combining the bounds, we have
\begin{align*}
    &\PP\left( \left\{R_K^{(1)} \mathbbm{1}\{\bar{\mathcal{G}}^K\} \geq x-\lceil K^\gamma\rceil H\right\}\right) \\
    &\qquad\qquad\leq\PP\left( \mathbbm{1}\{\bar{\mathcal{G}}^K\}\left(m_K^{\mathrm{mb}} - \sum_{k=\lceil K^\gamma\rceil}^{K-1}\sum_{l=0}^{H-1} \Dvm_{h+1}^k \right) \geq  x-\lceil K^\gamma\rceil H \right)\\
    &\qquad\qquad\leq\PP\left(  - \sum_{k=\lceil K^\gamma\rceil}^{K-1}\sum_{l=0}^{H-1} \Dvm_{h+1}^k \mathbbm{1}\{\mathcal{G}^k\} \geq  x-\lceil K^\gamma\rceil H - m_K^{\mathrm{mb}}\right)\tag{since $x > \lceil K^\gamma\rceil H$}\\
    &\qquad\qquad\leq \exp \left( - \frac{(x-\lceil K^\gamma\rceil H - m_K^{\mathrm{mb}})^2}{4H(H+1)(2H+1)K/3}\right).
\end{align*}
where the last inequality follows by applying Azuma–Hoeffding.\hfill$\square$

\subsection{Proof of Proposition \ref{prop:expectGap-UCBVI}}
    We have
    \begin{align}
        \E[R_K] = & \E[R_K^{(0)}] + \E[R_K^{(1)}] \nonumber\\
        \leq &H\lceil K^\gamma\rceil + \E\left[R_K^{(1)}  \left(\mathbbm{1}\left\{\bar{\mathcal{G}}^K\right\} + \mathbbm{1}\left\{\left(\bar{\mathcal{G}}^K\right)^\mathrm{c}\right\}\right)\right] \nonumber\\
        = &H\lceil K^\gamma\rceil + \E \left[R_K^{(1)} \mathbbm{1}\left\{\bar{\mathcal{G}}^K\right\}\right] + \E\left[R_K^{(1)} \mathbbm{1}\left\{\left(\bar{\mathcal{G}}^K\right)^c\right\}\right] \nonumber\\
        \leq &H\lceil K^\gamma\rceil + \E \left[R_K^{(1)} \mathbbm{1}\left\{\bar{\mathcal{G}}^K\right\}\right] + H(K-\lceil K^\gamma\rceil)\PP\left(\left(\bar{\mathcal{G}}^K\right)^\mathrm{c}\right).\label{eq:upperBoundExpTemp-UCBVI}
    \end{align}
    Note that by \eqref{eq:R_K_1_final_bound} and \eqref{eq:m_k^mb-ineq}, we have
    \begin{align*}  
    R_K^{(1)} \mathbbm{1}\{\bar{\mathcal{G}}^K\} \leq   \mathbbm{1}\{\bar{\mathcal{G}}^K\}\left(m_K^{\mathrm{mb}} - \sum_{k=\lceil K^\gamma\rceil}^{K-1}\sum_{h=0}^{H-1} \Dvm_{h+1}^k \right)
    \end{align*}
    Hence, we obtain
    \begin{align*}
        \E \left[R_K^{(1)} \mathbbm{1}\left\{\bar{\mathcal{G}}^K\right\}\right] 
        &\leq m_K^{\mathrm{mb}} + \E\left[\left( - \sum_{k=\lceil K^\gamma\rceil}^{K-1}\sum_{h=0}^{H-1} \Dvm_{h+1}^k\right) \mathbbm{1}\left\{\bar{\mathcal{G}}^K\right\}\right]\\
        &=m_K^{\mathrm{mb}} + \E\left[ - \sum_{k=\lceil K^\gamma\rceil}^{K-1}\sum_{h=0}^{H-1} \Dvm_{h+1}^k\right] + \E\left[\left( \sum_{k=\lceil K^\gamma\rceil}^{K-1}\sum_{h=0}^{H-1} \Dvm_{h+1}^k\right) \mathbbm{1}\left\{\left(\bar{\mathcal{G}}^K\right)^{\mathrm{c}}\right\}\right]\\
        &\leq m_K^{\mathrm{mb}} + KH(H+1)\delta_K^{\mathrm{mb}}(\gamma)\numberthis\label{eq:upperBoundR1R2Temp-UCBVI}.
    \end{align*}

    Combining \eqref{eq:upperBoundExpTemp-UCBVI} and \eqref{eq:upperBoundR1R2Temp-UCBVI}, we get
    \begin{align*}
        \E[R_K]\leq  H\lceil K^\gamma\rceil + m_K^{\mathrm{mb}} + KH(H+1)\delta_K^{\mathrm{mb}}(\gamma).
\end{align*}\hfill$\square$

\subsection{Proof of Theorem~\ref{thm:tailGap-Qlearning}}
Fix state-action pair $(s,a)$, horizon $h$ and episode number $k$. Consider the sequence of episodes $\tau_h(s,a,1),\tau_h(s,a,2),\dots$ in which
$(s_h^{\tau_h(s,a,i)},a_h^{\tau_h(s,a,i)})=(s,a)$ for all $i\geq 1$.
Let $t=n_h^k(s,a)$ be the visit count to $(s,a,h)$ up to the end of episode $k$, and define
\begin{align*}
Y_i \;\coloneqq\; r_h(s,a) + V_{h+1}^{\tau_h(s,a,i)}\!\Big(s_{h+1}^{\tau_h(s,a,i)}\Big) + b_h(i),
\qquad 1\le i\le t.
\end{align*}

By unrolling the stochastic approximation recursion (using \eqref{eq:alpha-weights}),
the value stored in the table after $t$ visits satisfies
\begin{align}
Q_h^k(s,a)
\;=\;
\lr^{0\rightarrow t}\cdot H + \sum_{i=1}^t \lr^{i\rightarrow t} \, Y_i.
\label{eq:unroll-Q}
\end{align}

Next, define 
\begin{align*}
\xi_i
\;\coloneqq\;
V_{h+1}^\star\!\Big(s_{h+1}^{\tau_h(s,a,i)}\Big)
-\sum_{s'}V_{h+1}^\star(s') P_h^*(s'\mid s,a),
\qquad 1\le i\le t,
\end{align*}
which is a bounded martingale difference noise with respect to the filtration $\{\mathcal{F}^{\tau_h(s,a,t)}_h\}_{h=0,t=1}^{H-1,K}$, i.e., we have $\E[\xi_i\mid \mathcal{F}^{\tau_h(s,a,i)}_h]=0$ and
$|\xi_i|\le H$ since $V_{h+1}^\star\in[0,H]$.

For a fixed $t \geq 1$, define the per-visitation good event $\mathcal{E}^t_h(s,a)$ as 
\[
\mathcal{E}^{t}_h(s,a) \;\coloneqq\; \left\{
\Big|\sum_{i=1}^t \lr^{i\rightarrow t} \xi_i\Big|
\le
\sqrt{2H^2\left(\sum_{i=1}^t(\lr^{i\rightarrow t})^2\right)\,\iota}
\right\},
\]
where $\iota=\ln(2SAHK)+\mu K^\alpha$. Since the sequence $\{\xi_i\}$ is a martingale difference sequence with $|\xi_i|\le H$, the weighted sum $\{ \sum_{i=1}^j \lr^{i\rightarrow t} \xi_i\}_{j=1}^t$ is again a martingale with respect to the same filtration, and with increments bounded by $H\lr^{j\rightarrow t}$. Azuma--Hoeffding yields
\[
\PP\!\left(\left|\sum_{i=1}^t \lr^{i\rightarrow t} \xi_i\right|\ge \eps\right)
\le 2\exp\!\left(-\frac{\eps^2}{2H^2\sum_{i=1}^t(\lr^{i\rightarrow t})^2}\right).
\]
Setting $\eps=\sqrt{2H^2(\sum(\lr^{i\rightarrow t})^2)\iota}$ gives $\PP((\mathcal{E}^t_h(s,a))^c)\le 2 e^{-\iota}$. By a union bound over $SAH$ tables and at most $K$ visit counts, we have 
\begin{align*}
    \PP(\bar{\mathcal{E}}^c)\;\le 2SAKH e^{-\iota} = e^{-\mu K^\alpha} = \delta_K^{\mathrm{mf}}
\end{align*}
where $\bar{\mathcal{E}}^K \coloneqq \bigcap_{t=1}^{K} \bigcap_{s,a}\bigcap_h \mathcal{E}^t_h(s,a)$ is the good event. By Lemma~\ref{lem:weights}, $\sum_{i=1}^t(\lr^{i\rightarrow t})^2\le 2H/t$, which implies that under the good event $\bar{\mathcal{E}}^K$, we have 
\begin{align}    \label{eq:eta_sum_bound}
\left|\sum_{i=1}^t \lr^{i\rightarrow t} \xi_i\right|
\;\le\;
\sqrt{\frac{4H^3\iota}{t}}.
\end{align}

\begin{lemma}
\label{lem:optimism}
On $\bar{\mathcal{E}}^K$, for all $h\in[H]$, and all $(s,a)\in\mathcal{S}\times\mathcal{A}$,
\[
Q_h^k(s,a)\;\ge\;Q_h^\star(s,a),
\qquad\text{and hence}\qquad
V_h^k(s)\;\ge\;V_h^\star(s),
\]
where $Q_h^k$ and $V_h^k$ are as in Algorithm \ref{alg:qlearning-tail}. Furthermore,
\begin{align*}
    Q_h^k(s,a) - Q_h^\star(s,a) \leq \lr^{0\rightarrow t}\cdot H + \sum_{i=1}^t \lr^{i\rightarrow t} \, \left(V_{h+1}^{\tau_h(s,a,i)}(s_{h+1}^{\tau_h(s,a,i)})-V_{h+1}^\star(s_{h+1}^{\tau_h(s,a,i)})\right) + 2 b_h(t).
\end{align*}
\end{lemma}

\begin{proof}[Proof of Lemma \ref{lem:optimism}]
The proof follows by backward induction on the horizon. By definition, we have $V_{H}^k\equiv 0$, and hence $V_H^k(s)=V^*_H(s)$ for all states $s\in\mathcal{S}$ and all episodes $k\geq 0$. 

Suppose that for some $h\leq H-1$, we have $V_{h+1}^k(s)\geq V^*_{h+1}(s)$ for all states $s\in\mathcal{S}$ and all episodes $k\geq 0$. Fix state-action pair $(s,a)$ and episode number $k$, and consider the unrolled form \eqref{eq:unroll-Q} at episode $k$. We have
\begin{align}
Q_h^k(s,a)-Q_h^*(s,a)
\;=&\;
\lr^{0\rightarrow t}\cdot (H-Q_h^*(s,a)) + \sum_{i=1}^t \lr^{i\rightarrow t} \, (Y_i-Q_h^*(s,a))\nonumber\\
\geq &\; \sum_{i=1}^t \lr^{i\rightarrow t} \, \left(Y_i-r_h(s,a)-\sum_{s'}V_{h+1}^\star(s') P_h^*(s'\mid s,a)\right),\label{eq:Q_dif_eta}
\end{align}
where the last inequality is due to the Bellman equation and the fact that $Q^*_h(s,a)\leq H$. Notice that
\[
Y_i - \Big(r_h(s,a)+\sum_{s'}V_{h+1}^\star(s') P_h^*(s'\mid s,a)\Big)
=
V_{h+1}^{\tau_h(s,a,i)}(s_{h+1}^{\tau_h(s,a,i)})-V_{h+1}^\star(s_{h+1}^{\tau_h(s,a,i)})
+\xi_i + b_h(i).
\]
Substituting the above in Eq. \eqref{eq:Q_dif_eta}, we get 
\begin{align}
Q_h^k(s,a)-Q_h^*(s,a)\geq &\; \sum_{i=1}^t \lr^{i\rightarrow t} \, \left(V_{h+1}^{\tau_h(s,a,i)}(s_{h+1}^{\tau_h(s,a,i)})-V_{h+1}^\star(s_{h+1}^{\tau_h(s,a,i)})
+\xi_i + b_h(i)\right)\nonumber\\
\geq & \sum_{i=1}^t \lr^{i\rightarrow t} \, \left(
\xi_i + b_h(i)\right)\tag{by induction hypothesis}\\
\geq & -\sqrt{\frac{4H^3(\ln(2SAHK)+\mu K^\alpha)}{t}} +  \left(\sum_{i=1}^t \lr^{i\rightarrow t} b_h(i)\right) \tag{by \eqref{eq:eta_sum_bound}}\\
\geq & 0 \tag{by Lemma \ref{lem:weights}}
\end{align}
This yields $Q_h^k(s,a)\ge Q_h^\star(s,a)$ for all $k\in[K]$, which implies $V_h^k(s)\ge V_h^\star(s)$ for all $k\in[K]$. To get the upper bound, note that
\begin{align*}
    Q_h^k(s,a) - Q_h^\star(s,a) &\leq \lr^{0\rightarrow t}\cdot H  + \sum_{i=1}^t \lr^{i\rightarrow t} \, \left(Y_i-r_h(s,a)-\sum_{s'}V_{h+1}^\star(s') P_h^*(s'\mid s,a)\right)\\
    &= \lr^{0\rightarrow t}\cdot H  + \sum_{i=1}^t \lr^{i\rightarrow t} \, \left(V_{h+1}^{\tau_h(s,a,i)}(s_{h+1}^{\tau_h(s,a,i)})-V_{h+1}^\star(s_{h+1}^{\tau_h(s,a,i)}) + \xi_i + b_h(i)\right)\\
    &\leq \lr^{0\rightarrow t}\cdot H + \sum_{i=1}^t \lr^{i\rightarrow t} \, \left(V_{h+1}^{\tau_h(s,a,i)}(s_{h+1}^{\tau_h(s,a,i)})-V_{h+1}^\star(s_{h+1}^{\tau_h(s,a,i)})\right) + 2 b_h(t).
\end{align*}
where the last inequality follows by the fact that
\begin{align*}
    \sum_{i=1}^t \lr^{i\rightarrow t} \, \left( \xi_i + b_h(i)\right) &\leq \sqrt{\frac{4H^3(\ln(2SAHK)+\mu K^\alpha)}{t}} + \sum_{i=1}^t \lr^{i\rightarrow t}b_h(i)\tag{by \eqref{eq:eta_sum_bound}}\\
    &\leq 3\sqrt{\frac{4H^3(\ln(2SAHK)+\mu K^\alpha)}{t}} \tag{by Lemma~\ref{lem:weights}}
\end{align*}
\end{proof}

Lemma \ref{lem:optimism} provides the standard optimism property for $Q$-function estimation. Next, we aim to use Lemma \ref{lem:optimism} to upper bound the total regret $R_K$.

Similar to the analysis presented in the proof of Theorem~\ref{thm:tailGap-UCBVI}, by \eqref{eq:Dvs-equality} regret at episode $k\in[K]$ is bounded as follows:
\begin{align*}
    V_0^{*}(s_0^k) - V_0^{\pii^k}(s_0^k) = \Dvs^k_0(s_0^k) =\sum_{h=0}^{H-1}\clip\left(\gap_h(s_h^k,a_h^k)\right) - \sum_{h=0}^{H-1}\Dvsm^k_{h+1}(s_{h+1}^k)
\end{align*}
where the sequence $\{\Dvsm^k_{h}(s_{h}^k)\}_{h=1,k=0}^{H-1,K-1}$ defined as 
\begin{align*}
    \Dvsm^k_{h}(s_{h}^k) = \Dvs^k_{h}(s_{h}^k) - \E\left[\Dvs^k_{h}(s_{h}^k)\big| \mathcal{F}_{h-1}^{k} \right]
\end{align*}
is a martingale difference sequence with respect to the filtration $\{\mathcal{F}_{h-1}^{k}\}_{h=1,k=0}^{H-1,K-1}$. 

Recall that $R_K$ denote the cumulative regret incurred up to episode $K-1$. Then, on the good event $\bar{\mathcal{E}}^K$, we have:
\begin{align}
    R_K\mathbbm{1}\{\bar{\mathcal{E}}^K\} &= \sum_{k=0}^{K-1}\left[\sum_{h=0}^{H-1}\clip\left(\gap_h(s_h^k,a_h^k)\right) - \sum_{h=0}^{H-1}\Dvsm^k_{h+1}(s_{h+1}^k)\right]\mathbbm{1}\{\bar{\mathcal{E}}^K\}\nonumber\\
    &=\sum_{k=0}^{K-1}\Bigg[\sum_{h=0}^{H-1}\clip\left(V_h^{*}(s_h^k) \pm V_h^{k}(s_h^k) - Q_h^{*}(s_h^k,a_h^k)\right) - \sum_{h=0}^{H-1}\Dvsm^k_{h+1}(s_{h+1}^k)\Bigg]\mathbbm{1}\{\bar{\mathcal{E}}^K\}\nonumber\\
    &\leq \sum_{k=0}^{K-1}\left[\sum_{h=0}^{H-1}\clip\left(Q_h^{k}(s_h^k,a_h^k) - Q_h^{*}(s_h^k,a_h^k)\right) - \sum_{h=0}^{H-1}\Dvsm^k_{h+1}(s_{h+1}^k)\right]\mathbbm{1}\{\bar{\mathcal{E}}^K\}\label{eq:18_before}
\end{align}
where the last inequality follows by definition of ${\mathcal{E}}^K$ and the fact that $V_h^{k}(s_h^k) = Q_h^{k}(s_h^k,a_h^k)$.

Next, we bound the first term as follows:
\begin{align*}
    \mathbbm{1}\{\bar{\mathcal{E}}^K\}\sum_{k=0}^{K-1}\sum_{h=0}^{H-1}\clip\left(Q_h^{k}(s_h^k,a_h^k) - Q_h^{*}(s_h^k,a_h^k)\right) \leq & \mathbbm{1}\{\bar{\mathcal{E}}^K\}\sum_{k=0}^{K-1}\sum_{h=0}^{H-1} \sum_{n=1}^{N}\left(Q_h^{k}(s_h^k,a_h^k) - Q_h^{*}(s_h^k,a_h^k)\right) w_h^k(n) \\
    =&\mathbbm{1}\{\bar{\mathcal{E}}^K\} \sum_{n=1}^{N}\sum_{h=0}^{H-1}\sum_{k=0}^{K-1} \left(Q_h^{k}(s_h^k,a_h^k) - Q_h^{*}(s_h^k,a_h^k)\right) w_h^k(n)
\end{align*}
where $w_h^k(n) = \mathbbm{1}\{Q_h^{k}(s_h^k,a_h^k) - Q_h^{*}(s_h^k,a_h^k) \in [2^{n-1}\gapstar,2^n\gapstar)\}$ and $N=\ceil{\log_2(H/\gapstar)}$. Applying \cite[Proof of Lemma 4.2]{yang2021q}, we have
\begin{align*}
        \mathbbm{1}\{\bar{\mathcal{E}}^K\} \sum_{k=0}^{K-1} \left(Q_h^{k}(s_h^k,a_h^k) - Q_h^{*}(s_h^k,a_h^k)\right) w_h^k(n)\leq & \mathbbm{1}\{\bar{\mathcal{E}}^K\} \sum_{n=1}^{N}\sum_{h=0}^{H-1} (eSAH^2+ 40\sqrt{eSA C^{(n,h)} H^5\iota}) 
\end{align*}
where $C^{(n,h)} \coloneqq \left|\{k: (Q_h^{k}(s_h^k,a_h^k) - Q_h^{*}(s_h^k,a_h^k)) \in [2^{n-1}\gapstar,2^n\gapstar)\}\right|$ satisfies 
\begin{align*}
 C^{(n,h)} \leq  \frac{6404eSAH^5\iota }{4^{n} (\gapstar)^2}.
\end{align*}
This implies
\begin{align}
    \mathbbm{1}\{\bar{\mathcal{E}}^K\}\sum_{k=0}^{K-1}\sum_{h=0}^{H-1}\clip\left(Q_h^{k}(s_h^k,a_h^k) - Q_h^{*}(s_h^k,a_h^k)\right) \nonumber
    &\leq \mathbbm{1}\{\bar{\mathcal{E}}^K\} \left(eSAH^3 \ceil{\log_2(H/\gapstar)} + \frac{3240 eSAH^6\iota}{\gapstar}\right)\nonumber\\
    &\leq \mathbbm{1}\{\bar{\mathcal{E}}^K\}  \underbrace{\frac{3242 eSAH^6\iota}{\gapstar}}_{\eqqcolon m_K^{\mathrm{mf}}} \label{eq:19_before}
\end{align}
Combining the bounds, we have
\begin{align*}
    \PP\left( \left\{R_K \mathbbm{1}\{\bar{\mathcal{E}}^K\} \geq x \right\}\right) 
    &\leq\PP\left( \mathbbm{1}\{\bar{\mathcal{E}}^K\}\left(m_K^{\mathrm{mf}} - \sum_{k=0}^{K-1} \sum_{h=0}^{H-1}\Dvsm^k_{h+1}(s_{h+1}^k) \right) \geq  x \right)\\
    &\leq\PP\left(-\sum_{k=0}^{K-1}\sum_{h=0}^{H-1} \Dvsm_{h+1}^k(s_{h+1}^k) \geq  x - m_K^{\mathrm{mf}}\right)\\
    &\leq \exp \left( - \frac{(x- m_K^{\mathrm{mf}})^2}{H^3K}\right).
\end{align*}
where the last inequality follows by applying Azuma–Hoeffding. Hence,
\begin{align*}
    \PP(R_K\geq x) &\leq \delta_K^{\mathrm{mf}} + \exp \left( - \frac{(x- m_K^{\mathrm{mf}})^2}{H^3K}\right). 
\end{align*}\hfill$\square$

\subsection{Proof of Proposition \ref{prop:expectGap-UCBQVI}}
    We have
    \begin{align}
        \E[R_K] = & \E\left[R_K  \left(\mathbbm{1}\left\{\bar{\mathcal{E}}^K\right\} + \mathbbm{1}\left\{\left(\bar{\mathcal{E}}^K\right)^\mathrm{c}\right\}\right)\right] \nonumber\\
        = & \E \left[R_K \mathbbm{1}\left\{\bar{\mathcal{E}}^K\right\}\right] + \E\left[R_K \mathbbm{1}\left\{\left(\bar{\mathcal{E}}^K\right)^c\right\}\right] \nonumber\\
        \leq & \E \left[R_K \mathbbm{1}\left\{\bar{\mathcal{E}}^K\right\}\right] + HK\PP\left(\left(\bar{\mathcal{E}}^K\right)^\mathrm{c}\right).\label{eq:upperBoundExpTemp-UCBQVI}
    \end{align}
    Note that by \eqref{eq:18_before} and \eqref{eq:19_before}, we have
    \begin{align*}  
    R_K \mathbbm{1}\{\bar{\mathcal{E}}^K\} \leq   \mathbbm{1}\{\bar{\mathcal{E}}^K\}\left(m_K^{\mathrm{mf}} - \sum_{k=0}^{K-1}\sum_{h=0}^{H-1} \Dvsm^k_{h+1}(s_{h+1}^k) \right)
    \end{align*}
    Hence, we obtain
    \begin{align*}
        \E \left[R_K \mathbbm{1}\left\{\bar{\mathcal{E}}^K\right\}\right] 
        &\leq m_K^{\mathrm{mf}} + \E\left[\left( - \sum_{k=0}^{K-1}\sum_{h=0}^{H-1} \Dvsm^k_{h+1}(s_{h+1}^k)\right) \mathbbm{1}\left\{\bar{\mathcal{E}}^K\right\}\right]\\
        &=m_K^{\mathrm{mf}} + \E\left[ - \sum_{k=0}^{K-1}\sum_{h=0}^{H-1} \Dvsm^k_{h+1}(s_{h+1}^k)\right] + \E\left[\left( \sum_{k=0}^{K-1}\sum_{h=0}^{H-1} \Dvsm^k_{h+1}(s_{h+1}^k)\right) \mathbbm{1}\left\{\left(\bar{\mathcal{E}}^K\right)^{\mathrm{c}}\right\}\right]\\
        &\leq m_K^{\mathrm{mf}} + KH^2\delta_K^{\mathrm{mf}}\numberthis\label{eq:upperBoundR1R2Temp-UCBQVI}.
    \end{align*}

    Combining \eqref{eq:upperBoundExpTemp-UCBQVI} and \eqref{eq:upperBoundR1R2Temp-UCBQVI}, we get
    \begin{align*}
        \E[R_K]\leq   m_K^{\mathrm{mf}} + 2 KH^2\delta_K^{\mathrm{mf}}.
    \end{align*}\hfill$\square$

\newpage
\appendix
\section*{Appendix}
\addcontentsline{toc}{section}{Appendix}

\begin{lemma}\label{lem:sumUpperBound-UCBVI}
The following bound holds:
\begin{align*}
    \sum_{n=1}^K 
        &\exp\left(-  2n  \left(\frac{b_h^{n \vee \lceil K^\gamma\rceil}(n)}{H-h-1} \right)^2\right) \leq 2^{-S}\times
    \begin{cases}
        \displaystyle K \exp\left(-2\mu { K^{\alpha}}\right),
        & \text{\KD}, \\[2ex]
        \displaystyle K \exp\left(-2\mu  K^{\gamma\alpha}\right),
        & \text{\KI}.
    \end{cases}
\end{align*}
\end{lemma}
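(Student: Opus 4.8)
The plan is to reduce the lemma to a direct computation by exploiting the fact that, for the bonus schedules in use, the exponent in each summand collapses to an expression that is (nearly) independent of $n$. First I would substitute the explicit bonus form into the exponent. Writing $b_h^k(n) = (H-h-1)\sqrt{(0.5S\ln 2 + \mu c_k)/n}$, where $c_k = K^\alpha$ in the \KD\ case and $c_k = (k+1)^\alpha$ in the \KI\ case, the factor $(H-h-1)^2$ cancels against the $1/(H-h-1)^2$ and the $1/n$ inside the square root cancels against the prefactor $2n$. Concretely,
\[
2n\left(\frac{b_h^{n\vee\lceil K^\gamma\rceil}(n)}{H-h-1}\right)^2 = 2n\cdot\frac{0.5S\ln 2 + \mu\, c_{n\vee\lceil K^\gamma\rceil}}{n} = S\ln 2 + 2\mu\, c_{n\vee\lceil K^\gamma\rceil},
\]
so each summand equals $2^{-S}\exp(-2\mu\, c_{n\vee\lceil K^\gamma\rceil})$ after pulling out $\exp(-S\ln 2) = 2^{-S}$.

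For the \KD\ case, $c_{n\vee\lceil K^\gamma\rceil} = K^\alpha$ is independent of $n$, so the summand is the constant $2^{-S}\exp(-2\mu K^\alpha)$; summing the $K$ identical terms over $n=1,\dots,K$ immediately gives $2^{-S}K\exp(-2\mu K^\alpha)$, which matches the claim.

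For the \KI\ case, $c_{n\vee\lceil K^\gamma\rceil} = (n\vee\lceil K^\gamma\rceil + 1)^\alpha$, which is nondecreasing in $n$ and minimized when $n\le\lceil K^\gamma\rceil$. The key (and only slightly delicate) step is the uniform lower bound $n\vee\lceil K^\gamma\rceil + 1 \ge \lceil K^\gamma\rceil \ge K^\gamma \ge 1$, valid for $K\ge 1$ and $\gamma\ge 0$; since $\alpha\in[0,1]$ and the base is at least one, raising to the power $\alpha$ is order-preserving, yielding $c_{n\vee\lceil K^\gamma\rceil} \ge K^{\gamma\alpha}$ for every $n$. Hence each summand is at most $2^{-S}\exp(-2\mu K^{\gamma\alpha})$, and summing the $K$ terms gives the stated bound $2^{-S}K\exp(-2\mu K^{\gamma\alpha})$.

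The computation carries no real obstacle: the cancellation of $n$ is precisely what makes these schedules tractable, and the main point to be careful about is the monotonicity/lower-bound argument in the \KI\ case, specifically that $K^\gamma\ge 1$ so that $t\mapsto t^\alpha$ is order-preserving on the relevant range. No concentration or martingale machinery is needed here; the lemma is a purely analytic bookkeeping step that feeds the per-$(s,a,h)$ union bound in the proof of Theorem~\ref{thm:tailGap-UCBVI}.
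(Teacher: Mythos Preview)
Your proof is correct and follows essentially the same approach as the paper: after substituting the bonus into the exponent, the $n$-dependence cancels and each summand becomes $2^{-S}\exp(-2\mu\,c_{n\vee\lceil K^\gamma\rceil})$, which is then uniformly bounded. The paper carries out the \KI\ case by explicitly splitting the sum at $n=\lceil K^\gamma\rceil$ into $S_K^{(1)}$ and $S_K^{(2)}$ and bounding each piece separately, whereas you apply the single uniform bound $c_{n\vee\lceil K^\gamma\rceil}\ge K^{\gamma\alpha}$ over all $n$ at once; the two arguments are equivalent and yield the identical final estimate. (A minor remark: the map $t\mapsto t^\alpha$ is order-preserving on $(0,\infty)$ for every $\alpha\ge 0$, so the condition ``the base is at least one'' that you invoke is not actually needed for the monotonicity step.)
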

\begin{proof}[Proof of Lemma \ref{lem:sumUpperBound-UCBVI}]
Recall that the function $b^k_h(\cdot)$ is defined as
\[
b^k_h(n)
=
\begin{cases}
(H-h-1)\sqrt{\dfrac{0.5 S\ln 2+\mu K^\alpha}{n}}, & \text{\KD},\\[2ex]
(H-h-1)\sqrt{\dfrac{0.5 S\ln 2+\mu (k+1)^\alpha}{n}}, & \text{\KI},
\end{cases}
\qquad
b^k_h(0)\coloneqq\infty.
\]
Let $S_K$ denote the summation in Lemma~\ref{lem:sumUpperBound-UCBVI}, i.e.,
\begin{align*}
    S_K \coloneqq \sum_{n=1}^{K} \exp\left(-  2n  \left(\frac{b_h^{n \vee \lceil K^\gamma\rceil}(n)}{H-h-1} \right)^2\right).
\end{align*}
The analysis depends on the choice of $b^k_h(\cdot)$.

\begin{itemize}[leftmargin=0pt]
    \item {\bf \KD:} We have
        \begin{align*}
        S_K &\leq \sum_{n=1}^{K} \exp\left(-S\ln2-2\mu  K^{\alpha}\right) = 2^{-S} K \exp\left({- 2\mu K^{\alpha}}\right) 
        \end{align*}
    \item {\bf \KI:} Define $S^{(1)}_K$ and $S^{(2)}_K$ as follows:
    \begin{align*}
        S^{(1)}_K \coloneqq \sum_{n=1}^{\lceil K^\gamma\rceil} \exp\left(- 2n  \left(\frac{b_h^{\lceil K^\gamma\rceil}(n)}{H-h-1} \right)^2\right),
        \qquad
        S^{(2)}_K \coloneqq \sum_{n=\lceil K^\gamma\rceil + 1}^{K} \exp\left(- 2n \left(\frac{b_h^{n}(n)}{H-h-1} \right)^2\right).
    \end{align*}
    Notice that $S_K = S^{(1)}_K + S^{(2)}_K,$ and that
        \begin{align*}
            b_h^{\lceil K^\gamma\rceil}(n) = (H-h-1)\sqrt{\frac{0.5 S\ln2 + \mu(\lceil K^\gamma\rceil+1)^{\alpha}}{n}},
            \quad
            b_h^n(n) = (H-h-1)\sqrt{\frac{0.5 S\ln2 + \mu( n+1 )^{\alpha}}{n}}.
        \end{align*}
        
        We proceed by bounding $S_K^{(1)}$ and $S_K^{(2)}$. We begin with $S_K^{(1)}$, for which we have
        \begin{align*}
        S_K^{(1)}
        &= \sum_{n=1}^{\lceil K^\gamma\rceil}\exp\left(-2n \left(\frac{0.5S\ln2 + \mu(\lceil K^\gamma\rceil+1)^{\alpha}}{n}\right)\right) \leq 2^{-S}\lceil K^\gamma\rceil \exp\left(-2\mu K^{\gamma\alpha}\right) 
        \end{align*}
    \end{itemize}

We next bound $S_K^{(2)}$:
\begin{align*}
S^{(2)}_K
&=\sum_{n=\lceil K^\gamma\rceil + 1}^{K} \exp\left(- 2n \left(\frac{b_h^{n}(n)}{H-h-1} \right)^2\right)\\
&\le\sum_{n= \lceil K^\gamma\rceil+1}^{K}2^{-S}\exp\left(-2\mu  n^{\alpha}\right) \\
&\leq 2^{-S} (K - \lceil K^\gamma\rceil)\exp\left(-2\mu  K^{\gamma\alpha}\right).
\end{align*}
Putting everything together, we obtain the desired result.
\end{proof}

\begin{lemma}
\label{lem:weights}
For the weights in \eqref{eq:alpha-weights} with $\lr^{(t)}=(H+1)/(H+t+1)$:
\begin{flalign*}
\textnormal{(a)}&\qquad\max_{i\leq  t}\lr^{i\rightarrow t} \leq \frac{H+1}{H+t+1},&&\\
\textnormal{(b)}&\qquad\sum_{i=0}^t (\lr^{i\rightarrow t})^2 \leq \frac{H+1}{H+t+1}&&\\
\textnormal{(c)}& \qquad\sum_{i=1}^t \frac{\lr^{i\rightarrow t}}{\sqrt{i}} \in  \left[\frac{1}{2\sqrt{t}},\frac{1}{\sqrt{t}}\right]&&
\end{flalign*}
\end{lemma}

\begin{proof}[Proof of Lemma \ref{lem:weights}]
(a) We have
\begin{align*}    
\lr^{i\rightarrow t}=&\frac{H+1}{H+i+1}\prod_{j=i+1}^t\left(1-\frac{H+1}{H+j+1}\right)\\
=&\frac{H+1}{H+i+1}\prod_{j=i+1}^t\frac{j}{H+j+1}\\
=&\frac{H+1}{H+t+1}\cdot\prod_{j=i+1}^t\frac{j}{H+j}\\
\leq & \frac{H+1}{H+t+1}
\end{align*}
(b) For the squared-sum bound, we have 
\begin{align*}
\sum_{i=0}^t (\lr^{i\rightarrow t})^2 &\leq \frac{H+1}{H+t+1}\sum_{i=0}^t \lr^{i\rightarrow t} \\
&= \frac{H+1}{H+t+1}.\tag{since $\sum_{i=0}^t \lr^{i\rightarrow t}=1$}
\end{align*}
(c) For $t=1$, the result is clear. Now, suppose the result holds for $t\geq 1$. We have $\lr^{i\rightarrow t+1} = \lr^{i\rightarrow t}(1-\lr^{(t+1)})$. Hence,
\begin{align*}
    \sum_{i=1}^{t+1} \frac{\lr^{i\rightarrow t+1}}{\sqrt{i}} &= \frac{\lr^{(t+1)}}{\sqrt{t+1}} + (1-\lr^{(t+1)})\sum_{i=1}^{t} \frac{\lr^{i\rightarrow t}}{\sqrt{i}} \\
    & \in \frac{\lr^{(t+1)}}{\sqrt{t+1}} + (1-\lr^{(t+1)}) \left[\frac{1}{2\sqrt{t}},\frac{1}{\sqrt{t}}\right]\\
    & \in \left[\frac{1}{2\sqrt{t+1}},\frac{1}{\sqrt{t+1}}\right],
\end{align*}
which completes the result.
\end{proof}

\begin{lemma}\label{lem:clip_ineq}
Let $y \ge 0$ and $z \in \mathbb{R}$. If $x \le y - z$ and $x\geq \gapstar$, then
    \[ x = \clip(x) \le \clip(2y) - 2z. \]
\end{lemma}
\begin{proof}
    If $2y \ge \gapstar$, then the inequality is trivial. Suppose that $2y < \gapstar$, then $\clip(2y) = 0$. Because $x \ge \gapstar$, we have $2y < x$, implying $-2y > -x$. Therefore, $-2z \ge 2x - 2y > 2x - x = x$. Thus, $\clip(2y) - 2z = -2z > x$, completing the proof.
\end{proof}

\bibliographystyle{plainnat} 
\bibliography{reference} 
\end{document}